%% file: main.tex
\RequirePackage{fix-cm}
\documentclass[twocolumn]{svjour3}       
\smartqed  
\usepackage{graphicx}
\usepackage{xcolor}

\usepackage{amsmath}
\usepackage{lineno}
\usepackage{algorithm}
\usepackage{algpseudocode}
\usepackage{booktabs}    
\usepackage{float}       
\usepackage{array}       
\usepackage[table]{xcolor} 
\usepackage{xspace}
\usepackage{rotating}

\input{commands}
\input{acks}

\newcommand{\tool}{GEG\xspace}
\newcommand{\replpackage}{\footnote{\url{https://github.com/giordanoDaloisio/GEG}}}

\begin{document}




\title{A Generalised Exponentiated Gradient Approach to Enhance Fairness in Binary and Multi-class Classification Tasks}

\author{Maryam Boubekraoui \and Giordano d'Aloisio \and Antinisca Di Marco}

\institute{Maryam Boubekraoui \at
  Laboratoire d’Intelligence Artificielle et Systèmes Industriels (LIASI), HESTIM Engineering and Business School,
 Casablanca, Morocco
 \\
 Laboratory LAMAI, Faculty of Sciences and Technologies, Cadi
Ayyad University, Marrakech, Morocco 
\and Giordano d'Aloisio, Antinisca Di Marco \at 
 Department of Information Engineering, Computer Science and Mathematics, University of L'Aquila, L'Aquila, Italy
 }

\maketitle

\begin{abstract}

The widespread use of AI and ML models in sensitive areas raises significant concerns about fairness. While the research community has introduced various methods for bias mitigation in binary classification tasks, the issue remains underexplored in multi-class classification settings.
To address this limitation, in this paper, we first formulate the problem of fair learning in multi-class classification as a multi-objective problem between
effectiveness (i.e., prediction correctness) and multiple linear fairness constraints.
Next, we propose a Generalised Exponentiated Gradient (GEG) algorithm
to solve this task. \tool is an in-processing algorithm that enhances fairness in binary and multi-class classification settings under multiple fairness definitions. We conduct an extensive empirical evaluation of \tool against six baselines across seven multi-class and three binary datasets, using four widely adopted effectiveness metrics and three fairness definitions. \tool outperforms existing baselines, demonstrating that it can improve fairness without compromising the classifier's effectiveness.

\keywords{
bias, fairness, multi-class classification, classification task}

\end{abstract}



\section{Introduction}

\input{sections/introduction}

\section{Background and Related Work}\label{sec:background}


\input{sections/rw}

\section{Methodology}\label{sec:methodology}

\input{sections/methodology}

\section{Evaluation}\label{sec:evaluation}

\input{sections/evaluation}

\section{Results}\label{sec:results}

\input{sections/results}

\section{Conclusion and Future Work}\label{sec:conclusion}

\input{sections/conclusion}

\begin{acknowledgements}
This work has been partially supported by \TAAck and by \SoBigDataITAck and by \FringeAck.
\end{acknowledgements}

\bibliographystyle{spmpsci} 
\bibliography{bibliography}

\clearpage

\appendix

\section{Derivation of Additional Linear Moment Conditions}

\subsection{Equalized Odds}\label{app:eq_odds}
For Equalized Odds, we impose that the classifier's 
prediction is independent of the sensitive attribute 
\( A \) conditionally on the true label \( Y = y \). 
This leads to defining separate moments for each group 
\( a \in \mathcal{A} \), each class $y_k \in \mathcal{Y}$,
and each true label \( y \in \mathcal{Y} \), of the form:
\[
\mu_{a,y}^{(k)}(h) = 
\mathbb{E}[\mathbf{1}_{\{h(X) = y_k\}} \mid A = a, Y = y],
\]
which represent the group-wise prediction rates for 
class $y_k$, conditioned on the true label $Y = y$. 
We also define the corresponding average moment across 
all groups:
\[
\mu_{*,y}^{(k)}(h) = 
\mathbb{E}[\mathbf{1}_{\{h(X) = y_k\}} \mid Y = y],
\]
which captures the overall prediction rate for class 
$y_k$ conditioned on the true label \( Y = y \). 
General Label Equalized Odds is satisfied when
\[
\mu_{a,y}^{(k)}(h) = \mu_{*,y}^{(k)}(h) 
\quad \forall a \in \mathcal{A},~ \forall y, y_k \in \mathcal{Y}.
\]
As before, each equality can be expressed as two 
inequalities:
\[
\mu_{a,y}^{(k)}(h) - \mu_{*,y}^{(k)}(h) \leq 0,
\]
\[
\mu_{*,y}^{(k)}(h) - \mu_{a,y}^{(k)}(h) \leq 0.
\]
In the binary sensitive attribute case \( A \in \{0,1\} \), 
we identify $A = 0$ as the \emph{unprivileged} group 
and $A = 1$ as the \emph{privileged} group. For each 
true label $y \in \mathcal{Y}$ and each class 
$y_k \in \mathcal{Y}$, we obtain:
\[
\mu_{0,y}^{(k)}(h) - \mu_{*,y}^{(k)}(h) \leq 0,
\]
\[
\mu_{*,y}^{(k)}(h) - \mu_{0,y}^{(k)}(h) \leq 0,
\]
\[
\mu_{1,y}^{(k)}(h) - \mu_{*,y}^{(k)}(h) \leq 0,
\]
\[
\mu_{*,y}^{(k)}(h) - \mu_{1,y}^{(k)}(h) \leq 0.
\]
To enforce General Label Equalized Odds across all 
classes and true labels simultaneously, we build a 
block-diagonal constraint system 
$M\,\mu(h) \leq \epsilon$, with 
$\epsilon = \mathbf{0} \in \mathbb{R}^{4|\mathcal{Y}|^2}$,
\[
\mu(h) =
\begin{bmatrix}
\mu_{0,y_0}^{(0)}(h) \\ \mu_{1,y_0}^{(0)}(h) \\ \mu_{*,y_0}^{(0)}(h) \\
\mu_{0,y_0}^{(1)}(h) \\ \mu_{1,y_0}^{(1)}(h) \\ \mu_{*,y_0}^{(1)}(h) \\
\vdots \\
\mu_{0,y_K}^{(K)}(h) \\ \mu_{1,y_K}^{(K)}(h) \\ \mu_{*,y_K}^{(K)}(h)
\end{bmatrix}
\in \mathbb{R}^{3|\mathcal{Y}|^2},
\]
\[
\begin{aligned}
M &= \mathrm{diag}(\underbrace{M_0, \dots, M_0}_{|\mathcal{Y}|^2})
\in \mathbb{R}^{4|\mathcal{Y}|^2 \times 3|\mathcal{Y}|^2},
\\
M_0 &= 
\begin{bmatrix}
\phantom{-}1 & \phantom{-}0 & -1 \\
-1 & \phantom{-}0 & \phantom{-}1 \\
\phantom{-}0 & \phantom{-}1 & -1 \\
\phantom{-}0 & -1 & \phantom{-}1
\end{bmatrix}.
\end{aligned}
\]

\subsection{Combined Parity}\label{app:combined_parity}
The Combined Parity constraint combines both 
Demographic Parity and Equalized Odds as joint 
conditions on the classifier. This results in the 
following pair of fairness constraints:
\[
\begin{aligned}
\mu_a^{(k)}(h) &= \mu_*^{(k)}(h), 
\qquad \text{and} \\
\mu_{a,y}^{(k)}(h) &= \mu_{*,y}^{(k)}(h), 
\quad \forall a \in \mathcal{A},~ \forall y, y_k \in \mathcal{Y}.
\end{aligned}
\]
Therefore, it can be expressed as inequalities:
\[
\mu_{a}^{(k)}(h) - \mu_{*}^{(k)}(h) \leq 0,
\]
\[
\mu_{*}^{(k)}(h) - \mu_{a}^{(k)}(h) \leq 0,
\]
\[
\mu_{a,y}^{(k)}(h) - \mu_{*,y}^{(k)}(h) \leq 0,
\]
\[
\mu_{*,y}^{(k)}(h) - \mu_{a,y}^{(k)}(h) \leq 0.
\]
When the binary sensitive attribute is in the form 
$A \in \{0,1\}$, for each class $y_k \in \mathcal{Y}$ 
and each true label $y \in \mathcal{Y}$, this gives:
\[
\mu_0^{(k)}(h) - \mu_*^{(k)}(h) \leq 0,
\]
\[
\mu_*^{(k)}(h) - \mu_0^{(k)}(h) \leq 0,
\]
\[
\mu_1^{(k)}(h) - \mu_*^{(k)}(h) \leq 0,
\]
\[
\mu_*^{(k)}(h) - \mu_1^{(k)}(h) \leq 0,
\]
\[
\mu_{0,y}^{(k)}(h) - \mu_{*,y}^{(k)}(h) \leq 0,
\]
\[
\mu_{*,y}^{(k)}(h) - \mu_{0,y}^{(k)}(h) \leq 0,
\]
\[
\mu_{1,y}^{(k)}(h) - \mu_{*,y}^{(k)}(h) \leq 0,
\]
\[
\mu_{*,y}^{(k)}(h) - \mu_{1,y}^{(k)}(h) \leq 0.
\]
We can compactly express this constraint system as 
$M\,\mu(h) \leq \epsilon$, with 
$\epsilon = \mathbf{0} \in 
\mathbb{R}^{4|\mathcal{Y}|+4|\mathcal{Y}|^2}$,
\[
\mu(h) =
\begin{bmatrix}
\mu_0^{(0)}(h) \\ \mu_1^{(0)}(h) \\ \mu_*^{(0)}(h) \\
\vdots \\
\mu_0^{(K)}(h) \\ \mu_1^{(K)}(h) \\ \mu_*^{(K)}(h) \\[6pt]
\mu_{0,y_0}^{(0)}(h) \\ \mu_{1,y_0}^{(0)}(h) \\ \mu_{*,y_0}^{(0)}(h) \\
\mu_{0,y_0}^{(1)}(h) \\ \mu_{1,y_0}^{(1)}(h) \\ \mu_{*,y_0}^{(1)}(h) \\
\vdots \\
\mu_{0,y_K}^{(K)}(h) \\ \mu_{1,y_K}^{(K)}(h) \\ \mu_{*,y_K}^{(K)}(h)
\end{bmatrix}
\in \mathbb{R}^{3|\mathcal{Y}|+3|\mathcal{Y}|^2},
\]
\[
\begin{aligned}
M &= \mathrm{diag}(\underbrace{M_0, \dots, M_0}_{|\mathcal{Y}|+|\mathcal{Y}|^2})
\in \mathbb{R}^{(4|\mathcal{Y}|+4|\mathcal{Y}|^2) 
\times (3|\mathcal{Y}|+3|\mathcal{Y}|^2)},
\\
M_0 &= 
\begin{bmatrix}
\phantom{-}1 & \phantom{-}0 & -1 \\
-1 & \phantom{-}0 & \phantom{-}1 \\
\phantom{-}0 & \phantom{-}1 & -1 \\
\phantom{-}0 & -1 & \phantom{-}1
\end{bmatrix}.
\end{aligned}
\]
\section{Proof of Theorem~\ref{thm:convergence}}
\label{app:proof_convergence}

We prove Theorem~\ref{thm:convergence} for the proposed Generalized
Exponentiated Gradient (\tool) algorithm.
The argument follows the reduction-based framework for fair classification and
online convex optimization introduced in~\cite{agarwal_reductions_2018}, and we
verify that the assumptions required for convergence remain valid in our
generalized setting with multiple fairness constraints and general label moments.

For any randomized classifier $Q \in \Delta(\mathcal{H})$, the empirical risk and
fairness moments satisfy
\[
\widehat{\mathcal{R}}(Q)
=
\mathbb{E}_{h\sim Q}\!\left[\widehat{\mathcal{R}}(h)\right],
\qquad
\widehat{\gamma}_i(Q)
=
\mathbb{E}_{h\sim Q}\!\left[\widehat{\gamma}_i(h)\right],
\]
and are therefore affine functions of $Q$.
Consequently, the empirical Lagrangian
\[
\mathcal{L}(Q,\boldsymbol{\lambda})
=
\widehat{\mathcal{R}}(Q)
+
\sum_{i=1}^{n} \lambda_i \bigl(\widehat{\gamma}_i(Q)-\widehat{\epsilon}_i\bigr)
\]
is convex in $Q$ and linear (hence concave) in $\boldsymbol{\lambda}$.

By construction of the empirical loss and of the general label fairness moments
using indicator functions, we have
$\widehat{\mathcal{R}}(h)\in[0,1]$ and
$\widehat{\gamma}_i(h)\in[-1,1]$ for all $h\in\mathcal{H}$ and all constraints $i$.
Assuming $\widehat{\epsilon}_i \in [0,1]$, it follows that
$|\widehat{\gamma}_i(h)-\widehat{\epsilon}_i|\le 2$.
Since the dual variable is constrained to the $\ell_1$-ball
$\Lambda=\{\boldsymbol{\lambda}\ge 0:\|\boldsymbol{\lambda}\|_1\le B\}$, it follows
that
\[
\bigl|\mathcal{L}(h,\boldsymbol{\lambda})\bigr|
\le
|\widehat{\mathcal{R}}(h)|
+
\sum_{i=1}^{n} \lambda_i
|\widehat{\gamma}_i(h)-\widehat{\epsilon}_i|
\le
1 + 2\|\boldsymbol{\lambda}\|_1
\le
1 + 2B ,
\]
and hence the payoffs of the associated zero-sum game are uniformly bounded.

To handle the constraint $\|\boldsymbol{\lambda}\|_1\le B$, we adopt the standard
reparameterization introduced in~\cite{agarwal_reductions_2018} and represent the
dual variable by a vector
$\widetilde{\boldsymbol{\lambda}} \in \Delta_{n+1}$ on the $(n+1)$-simplex, where
the first $n$ coordinates correspond to the fairness constraints and the remaining
coordinate acts as a slack variable.
The effective dual iterate is given by
$\boldsymbol{\lambda}=B\,\widetilde{\boldsymbol{\lambda}}_{1:n}$.

The dual player updates $\widetilde{\boldsymbol{\lambda}}^{(t)}$ using the
Exponentiated Gradient algorithm.
Since the per-round losses are linear and uniformly bounded by a constant of order $B$,
standard regret bounds for EG over the simplex
(e.g.,~\cite{kivinen1997exponentiated,freund1997decision})
imply that, after $T$ iterations, the cumulative regret of the dual sequence
satisfies
\[
\mathrm{Regret}_T
=
O\!\left(B\sqrt{T\log(n+1)}\right).
\]

On the learner side, we assume access to a cost-sensitive classification oracle
that returns, at each iteration $t$, a classifier $h_t$ satisfying
\[
\mathcal{L}(h_t,\boldsymbol{\lambda}^{(t)})
\le
\min_{h\in\mathcal{H}}
\mathcal{L}(h,\boldsymbol{\lambda}^{(t)}) + \tau ,
\]
for some fixed oracle accuracy $\tau \ge 0$.

Let $Q_T$ denote the uniform mixture over the classifiers
$\{h_t\}_{t=1}^T$ produced by the learner, and let
$\bar{\boldsymbol{\lambda}}_T=\frac{1}{T}\sum_{t=1}^T \boldsymbol{\lambda}^{(t)}$
denote the averaged dual iterate.
Combining the regret bound of the dual updates with the $\tau$-approximate
best-response property of the learner, and applying the standard
online-to-batch conversion for convex--concave zero-sum games with learning rate
$\eta=\Theta(1/\sqrt{T})$, yields
\[
\max_{\boldsymbol{\lambda}\in\Lambda}
\mathcal{L}(Q_T,\boldsymbol{\lambda})
-
\min_{Q\in\Delta(\mathcal{H})}
\mathcal{L}(Q,\bar{\boldsymbol{\lambda}}_T)
\;\le\;
O\!\left(\frac{B\sqrt{\log(n+1)}}{\sqrt{T}}\right) + \tau .
\]
This establishes the claim of Theorem~\ref{thm:convergence}.
\qed

\section{\revised{Detailed Results}}\label{app:detailed}

In the following, we report detailed metrics obtained by each approach analysed. In each table, we report in \textbf{bold} the cases with Wilcoxon $p$-value $\ll 0.05$ and \textit{large} $\hat{A}_{12}$ effect size with respect to the baseline(s). We highlight in \textit{italic} the cases with Wilcoxon $p$-value $\ll 0.05$ but no \textit{large} effect size.

\subsection{\revised{RQ$_1$ Detailed Results}}

\revised{Table \ref{tab:rq1_result} reports the results of the comparison of the fairness and effectiveness obtained by the three versions of \tool and the baseline LR model. Statistically significant differences in fairness scores with \textit{large} effect size are highlighted in \textbf{bold}.} 

\begin{table*}
\caption{RQ1: Fairness and effectiveness scores of \tool against a base LR model for multi-class classification.}
\label{tab:rq1_result}
\resizebox{\textwidth}{!}{\begin{tabular}{ll|ccc|cccc}
\toprule
& \textbf{Approach} & \textbf{SP} & \textbf{EO} & \textbf{AO} & \textbf{Accuracy} & \textbf{Precision} & \textbf{Recall} & \textbf{F1} \\
\midrule\midrule
\multirow{4}{*}{\begin{sideways}CMC\end{sideways}} & Baseline & 0.117 $\pm$ 0.061 & 0.217 $\pm$ 0.098 & 0.129 $\pm$ 0.059 & 0.606 $\pm$ 0.033 & 0.583 $\pm$ 0.037 & 0.560 $\pm$ 0.032 & 0.553 $\pm$ 0.033 \\\cmidrule{2-9}
 & GEG-CP & \textbf{0.053 $\pm$ 0.040} & \textbf{0.126 $\pm$ 0.063} & \textbf{0.078 $\pm$ 0.062} & 0.610 $\pm$ 0.030 & 0.591 $\pm$ 0.037 & 0.565 $\pm$ 0.023 & 0.558 $\pm$ 0.029 \\
 & GEG-SP & \textit{0.075 $\pm$ 0.072} & 0.190 $\pm$ 0.062 & 0.125 $\pm$ 0.058 & 0.609 $\pm$ 0.033 & 0.584 $\pm$ 0.044 & 0.561 $\pm$ 0.033 & 0.554 $\pm$ 0.037 \\
 & GEG-EO & \textbf{0.048 $\pm$ 0.032} & \textbf{0.154 $\pm$ 0.070} & 0.082 $\pm$ 0.040 & 0.605 $\pm$ 0.031 & 0.584 $\pm$ 0.040 & 0.561 $\pm$ 0.025 & 0.555 $\pm$ 0.027 \\
\midrule
\multirow{4}{*}{\begin{sideways}Crime\end{sideways}} & Baseline & 0.382 $\pm$ 0.062 & 0.519 $\pm$ 0.212 & 0.344 $\pm$ 0.105 & 0.474 $\pm$ 0.037 & 0.434 $\pm$ 0.046 & 0.452 $\pm$ 0.032 & 0.427 $\pm$ 0.037 \\\cmidrule{2-9}
 & GEG-CP & \textbf{0.233 $\pm$ 0.118} & \textbf{0.318 $\pm$ 0.157} & \textbf{0.190 $\pm$ 0.095} & 0.426 $\pm$ 0.044 & 0.406 $\pm$ 0.038 & 0.402 $\pm$ 0.048 & 0.384 $\pm$ 0.050 \\
 & GEG-SP & \textbf{0.193 $\pm$ 0.132} & 0.422 $\pm$ 0.196 & \textbf{0.242 $\pm$ 0.106} & 0.343 $\pm$ 0.070 & 0.332 $\pm$ 0.093 & 0.319 $\pm$ 0.058 & 0.281 $\pm$ 0.078 \\
 & GEG-EO & \textbf{0.296 $\pm$ 0.063} & \textit{0.393 $\pm$ 0.175} & \textbf{0.244 $\pm$ 0.096} & 0.447 $\pm$ 0.026 & 0.409 $\pm$ 0.043 & 0.422 $\pm$ 0.025 & 0.392 $\pm$ 0.027 \\
\midrule
\multirow{4}{*}{\begin{sideways}Drug\end{sideways}} & Baseline & 0.213 $\pm$ 0.125 & 0.359 $\pm$ 0.136 & 0.217 $\pm$ 0.090 & 0.687 $\pm$ 0.023 & 0.618 $\pm$ 0.033 & 0.614 $\pm$ 0.018 & 0.611 $\pm$ 0.025 \\\cmidrule{2-9}
 & GEG-CP & \textit{0.128 $\pm$ 0.097} & \textbf{0.280 $\pm$ 0.083} & \textbf{0.157 $\pm$ 0.049} & 0.682 $\pm$ 0.025 & 0.612 $\pm$ 0.027 & 0.607 $\pm$ 0.016 & 0.602 $\pm$ 0.019 \\
 & GEG-SP & \textbf{0.082 $\pm$ 0.073} & 0.300 $\pm$ 0.140 & \textbf{0.143 $\pm$ 0.065} & 0.682 $\pm$ 0.030 & 0.613 $\pm$ 0.037 & 0.605 $\pm$ 0.027 & 0.606 $\pm$ 0.031 \\
 & GEG-EO & 0.194 $\pm$ 0.139 & 0.324 $\pm$ 0.132 & \textit{0.191 $\pm$ 0.087} & 0.692 $\pm$ 0.023 & 0.626 $\pm$ 0.035 & 0.619 $\pm$ 0.022 & 0.616 $\pm$ 0.028 \\
\midrule
\multirow{4}{*}{\begin{sideways}Law\end{sideways}} & Baseline & 0.083 $\pm$ 0.019 & 0.105 $\pm$ 0.029 & 0.073 $\pm$ 0.018 & 0.666 $\pm$ 0.015 & 0.640 $\pm$ 0.016 & 0.652 $\pm$ 0.014 & 0.644 $\pm$ 0.015 \\\cmidrule{2-9}
 & GEG-CP & \textbf{0.046 $\pm$ 0.031} & 0.103 $\pm$ 0.031 & 0.059 $\pm$ 0.029 & \textbf{0.683 $\pm$ 0.010} & \textbf{0.657 $\pm$ 0.010} & \textbf{0.671 $\pm$ 0.010} & \textbf{0.660 $\pm$ 0.010} \\
 & GEG-SP & \textbf{0.018 $\pm$ 0.016} & 0.097 $\pm$ 0.039 & 0.063 $\pm$ 0.019 & 0.673 $\pm$ 0.015 & 0.649 $\pm$ 0.014 & 0.661 $\pm$ 0.014 & 0.652 $\pm$ 0.014 \\
 & GEG-EO & \textbf{0.047 $\pm$ 0.025} & 0.078 $\pm$ 0.040 & 0.051 $\pm$ 0.026 & \textbf{0.688 $\pm$ 0.012} & \textbf{0.661 $\pm$ 0.011} & \textbf{0.675 $\pm$ 0.011} & \textbf{0.661 $\pm$ 0.010} \\
\midrule
\multirow{4}{*}{\begin{sideways}Obesity\end{sideways}} & Baseline & 0.050 $\pm$ 0.042 & 0.573 $\pm$ 0.205 & 0.316 $\pm$ 0.093 & 0.668 $\pm$ 0.044 & 0.654 $\pm$ 0.046 & 0.665 $\pm$ 0.033 & 0.651 $\pm$ 0.038 \\\cmidrule{2-9}
 & GEG-CP & 0.061 $\pm$ 0.046 & \textbf{0.316 $\pm$ 0.138} & \textbf{0.195 $\pm$ 0.060} & 0.623 $\pm$ 0.105 & 0.613 $\pm$ 0.092 & 0.616 $\pm$ 0.111 & 0.601 $\pm$ 0.115 \\
 & GEG-SP & 0.050 $\pm$ 0.042 & 0.447 $\pm$ 0.095 & \textbf{0.217 $\pm$ 0.052} & \textit{0.687 $\pm$ 0.035} & \textit{0.685 $\pm$ 0.027} & \textit{0.686 $\pm$ 0.024} & \textit{0.668 $\pm$ 0.028} \\
 & GEG-EO & 0.063 $\pm$ 0.048 & \textbf{0.368 $\pm$ 0.118} & \textbf{0.209 $\pm$ 0.051} & 0.648 $\pm$ 0.025 & 0.623 $\pm$ 0.031 & 0.642 $\pm$ 0.031 & 0.623 $\pm$ 0.025 \\
\midrule
\multirow{4}{*}{\begin{sideways}Park\end{sideways}} & Baseline & 0.214 $\pm$ 0.072 & 0.352 $\pm$ 0.100 & 0.239 $\pm$ 0.078 & 0.473 $\pm$ 0.020 & 0.330 $\pm$ 0.039 & 0.402 $\pm$ 0.014 & 0.351 $\pm$ 0.017 \\\cmidrule{2-9}
 & GEG-CP & \textbf{0.083 $\pm$ 0.064} & \textbf{0.139 $\pm$ 0.084} & \textbf{0.093 $\pm$ 0.061} & 0.479 $\pm$ 0.018 & 0.329 $\pm$ 0.044 & 0.402 $\pm$ 0.015 & 0.348 $\pm$ 0.019 \\
 & GEG-SP & \textbf{0.053 $\pm$ 0.027} & \textbf{0.137 $\pm$ 0.060} & \textbf{0.068 $\pm$ 0.025} & \textit{0.480 $\pm$ 0.017} & 0.317 $\pm$ 0.011 & 0.400 $\pm$ 0.012 & 0.343 $\pm$ 0.012 \\
 & GEG-EO & \textbf{0.131 $\pm$ 0.051} & \textbf{0.156 $\pm$ 0.062} & \textbf{0.160 $\pm$ 0.064} & 0.479 $\pm$ 0.014 & \textit{0.384 $\pm$ 0.075} & 0.407 $\pm$ 0.016 & 0.365 $\pm$ 0.033 \\
\midrule
\multirow{4}{*}{\begin{sideways}Wine\end{sideways}} & Baseline & 0.115 $\pm$ 0.046 & 0.155 $\pm$ 0.049 & 0.125 $\pm$ 0.047 & 0.454 $\pm$ 0.019 & 0.246 $\pm$ 0.062 & 0.259 $\pm$ 0.006 & 0.201 $\pm$ 0.012 \\\cmidrule{2-9}
 & GEG-CP & \textbf{0.053 $\pm$ 0.038} & \textbf{0.083 $\pm$ 0.049} & \textbf{0.059 $\pm$ 0.040} & 0.457 $\pm$ 0.016 & 0.269 $\pm$ 0.082 & 0.259 $\pm$ 0.003 & 0.195 $\pm$ 0.007 \\
 & GEG-SP & \textbf{0.057 $\pm$ 0.023} & \textbf{0.087 $\pm$ 0.029} & \textbf{0.065 $\pm$ 0.023} & 0.459 $\pm$ 0.014 & 0.260 $\pm$ 0.052 & 0.261 $\pm$ 0.006 & 0.199 $\pm$ 0.014 \\
 & GEG-EO & \textbf{0.024 $\pm$ 0.022} & \textbf{0.058 $\pm$ 0.024} & \textbf{0.034 $\pm$ 0.023} & \textit{0.464 $\pm$ 0.015} & \textbf{0.279 $\pm$ 0.059} & \textbf{0.266 $\pm$ 0.007} & 0.208 $\pm$ 0.017 \\
\bottomrule
\end{tabular}}
\end{table*}

\subsection{\revised{RQ$_2$ Detailed Results}}

\begin{table*}[ht]
    \centering
        \caption{\revised{RQ$_2$: Results for Binary Classification against an LR model and the base EG approach from Agarwal et al. 
        }}
    \label{tab:rq2_binary}
    \revised{\resizebox{\textwidth}{!}{\begin{tabular}{ll|ccc|cccc}
\toprule
  & \textbf{Approach} & \textbf{SP} & \textbf{EO} & \textbf{AO} & \textbf{Accuracy} & \textbf{Precision} & \textbf{Recall} & \textbf{F1-score} \\
\midrule
\midrule
\multirow{4}{*}{\begin{sideways}Adult\end{sideways}} & Baseline & 0.185 $\pm$ 0.015 & 0.140 $\pm$ 0.068 & 0.112 $\pm$ 0.034 & 0.836 $\pm$ 0.004 & 0.784 $\pm$ 0.007 & 0.745 $\pm$ 0.007 & 0.760 $\pm$ 0.006 \\\cmidrule{2-9}
 & EG-SP & \textbf{0.018 $\pm$ 0.015} & 0.269 $\pm$ 0.069 & 0.150 $\pm$ 0.037 & 0.817 $\pm$ 0.006 & 0.761 $\pm$ 0.014 & 0.697 $\pm$ 0.009 & 0.718 $\pm$ 0.010 \\
 & EG-EO & \textbf{0.095 $\pm$ 0.010} & \textbf{0.042 $\pm$ 0.038} & \textbf{0.021 $\pm$ 0.017} & 0.822 $\pm$ 0.005 & 0.770 $\pm$ 0.012 & 0.707 $\pm$ 0.010 & 0.728 $\pm$ 0.010 \\
 & GEG-CP & \textbf{0.077 $\pm$ 0.037} & 0.119 $\pm$ 0.094 & \textbf{0.059 $\pm$ 0.053} & 0.829 $\pm$ 0.007 & 0.790 $\pm$ 0.017 & 0.705 $\pm$ 0.008 & 0.731 $\pm$ 0.009 \\
\midrule
\multirow{4}{*}{\begin{sideways}Compas\end{sideways}} & Baseline & 0.174 $\pm$ 0.040 & 0.102 $\pm$ 0.031 & 0.150 $\pm$ 0.042 & 0.675 $\pm$ 0.021 & 0.675 $\pm$ 0.020 & 0.666 $\pm$ 0.020 & 0.666 $\pm$ 0.021 \\\cmidrule{2-9}
 & EG-SP & \textbf{0.041 $\pm$ 0.029} & \textbf{0.061 $\pm$ 0.047} & \textbf{0.036 $\pm$ 0.031} & 0.665 $\pm$ 0.019 & 0.665 $\pm$ 0.021 & 0.654 $\pm$ 0.019 & 0.654 $\pm$ 0.019 \\
 & EG-EO & \textbf{0.047 $\pm$ 0.042} & \textbf{0.032 $\pm$ 0.023} & \textbf{0.046 $\pm$ 0.024} & 0.667 $\pm$ 0.017 & 0.666 $\pm$ 0.016 & 0.657 $\pm$ 0.018 & 0.657 $\pm$ 0.020 \\
 & GEG-CP & \textbf{0.063 $\pm$ 0.052} & \textbf{0.038 $\pm$ 0.025} & \textbf{0.055 $\pm$ 0.040} & 0.670 $\pm$ 0.020 & 0.670 $\pm$ 0.019 & 0.660 $\pm$ 0.018 & 0.660 $\pm$ 0.019 \\
\midrule
\multirow{4}{*}{\begin{sideways}German\end{sideways}} & Baseline & 0.212 $\pm$ 0.126 & 0.181 $\pm$ 0.161 & 0.173 $\pm$ 0.144 & 0.752 $\pm$ 0.051 & 0.705 $\pm$ 0.073 & 0.671 $\pm$ 0.063 & 0.679 $\pm$ 0.066 \\\cmidrule{2-9}
 & EG-SP & \textbf{0.109 $\pm$ 0.065} & \textit{0.106 $\pm$ 0.150} & 0.132 $\pm$ 0.056 & 0.742 $\pm$ 0.057 & 0.690 $\pm$ 0.083 & 0.655 $\pm$ 0.066 & 0.663 $\pm$ 0.072 \\
 & EG-EO & \textit{0.155 $\pm$ 0.079} & 0.139 $\pm$ 0.116 & 0.159 $\pm$ 0.071 & 0.748 $\pm$ 0.052 & 0.698 $\pm$ 0.076 & 0.663 $\pm$ 0.064 & 0.671 $\pm$ 0.069 \\
 & GEG-CP & 0.155 $\pm$ 0.093 & \textit{0.116 $\pm$ 0.147} & 0.171 $\pm$ 0.088 & 0.746 $\pm$ 0.060 & 0.698 $\pm$ 0.086 & 0.660 $\pm$ 0.066 & 0.670 $\pm$ 0.073 \\
\bottomrule
\end{tabular}}}
\end{table*}

\revised{table* \ref{tab:rq2_binary} reports the fairness and effectiveness achieved by \tool for binary classification. We recall that, in this context, our novel contribution is the extension of the original EG approach from Agarwal et al. \cite{agarwal_reductions_2018} with the CP constraint (GEG-CP in Table \ref{tab:rq2_binary}).}

\subsection{\revised{RQ$_3$ Detailed Results}}

\begin{table*}[ht]
    \centering
    \caption{\revised{RQ$_3$: Comparison with the DEMV pre-processing approach for multi-class classification.}}
    \label{tab:rq3_DEMV}
    \resizebox{\textwidth}{!}{
   \begin{tabular}{ll|ccc|cccc}
\toprule
& \textbf{Approach} & \textbf{SP} & \textbf{EO} & \textbf{AO} & \textbf{Accuracy} & \textbf{Precision} & \textbf{Recall} & \textbf{F1} \\
\midrule\midrule
\multirow{5}{*}{\begin{sideways}CMC\end{sideways}} & DEMV & 0.054 $\pm$ 0.043 & 0.181 $\pm$ 0.111 & 0.082 $\pm$ 0.046 & 0.603 $\pm$ 0.028 & 0.580 $\pm$ 0.036 & 0.553 $\pm$ 0.026 & 0.544 $\pm$ 0.028 \\\cmidrule{2-9}
 & GEG-CP & 0.053 $\pm$ 0.040 & \textit{0.126 $\pm$ 0.063} & 0.078 $\pm$ 0.062 & 0.610 $\pm$ 0.030 & 0.591 $\pm$ 0.037 & \textit{0.565 $\pm$ 0.023} & 0.558 $\pm$ 0.029 \\
 & GEG-SP & 0.075 $\pm$ 0.072 & 0.190 $\pm$ 0.062 & 0.125 $\pm$ 0.058 & 0.609 $\pm$ 0.033 & 0.584 $\pm$ 0.044 & 0.561 $\pm$ 0.033 & 0.554 $\pm$ 0.037 \\
 & GEG-EO & 0.048 $\pm$ 0.032 & 0.154 $\pm$ 0.070 & 0.082 $\pm$ 0.040 & 0.605 $\pm$ 0.031 & 0.584 $\pm$ 0.040 & 0.561 $\pm$ 0.025 & 0.555 $\pm$ 0.027 \\
\midrule
\multirow{5}{*}{\begin{sideways}Crime\end{sideways}} & DEMV & 0.343 $\pm$ 0.061 & 0.350 $\pm$ 0.200 & 0.246 $\pm$ 0.118 & 0.457 $\pm$ 0.034 & 0.423 $\pm$ 0.049 & 0.433 $\pm$ 0.036 & 0.406 $\pm$ 0.039 \\\cmidrule{2-9}
 & GEG-CP & \textbf{0.233 $\pm$ 0.118} & 0.318 $\pm$ 0.157 & 0.190 $\pm$ 0.095 & 0.426 $\pm$ 0.044 & 0.406 $\pm$ 0.038 & 0.402 $\pm$ 0.048 & 0.384 $\pm$ 0.050 \\
 & GEG-SP & \textbf{0.193 $\pm$ 0.132} & 0.422 $\pm$ 0.196 & 0.242 $\pm$ 0.106 & 0.343 $\pm$ 0.070 & 0.332 $\pm$ 0.093 & 0.319 $\pm$ 0.058 & 0.281 $\pm$ 0.078 \\
 & GEG-EO & \textbf{0.296 $\pm$ 0.063} & 0.393 $\pm$ 0.175 & 0.244 $\pm$ 0.096 & 0.447 $\pm$ 0.026 & 0.409 $\pm$ 0.043 & 0.422 $\pm$ 0.025 & 0.392 $\pm$ 0.027 \\
\midrule
\multirow{5}{*}{\begin{sideways}Drug\end{sideways}} & DEMV & 0.084 $\pm$ 0.081 & 0.243 $\pm$ 0.125 & 0.116 $\pm$ 0.064 & 0.686 $\pm$ 0.031 & 0.621 $\pm$ 0.037 & 0.609 $\pm$ 0.026 & 0.611 $\pm$ 0.030 \\\cmidrule{2-9}
 & GEG-CP & 0.128 $\pm$ 0.097 & 0.280 $\pm$ 0.083 & 0.157 $\pm$ 0.049 & 0.682 $\pm$ 0.025 & 0.612 $\pm$ 0.027 & 0.607 $\pm$ 0.016 & 0.602 $\pm$ 0.019 \\
 & GEG-SP & 0.082 $\pm$ 0.073 & 0.300 $\pm$ 0.140 & 0.143 $\pm$ 0.065 & 0.682 $\pm$ 0.030 & 0.613 $\pm$ 0.037 & 0.605 $\pm$ 0.027 & 0.606 $\pm$ 0.031 \\
 & GEG-EO & 0.194 $\pm$ 0.139 & 0.324 $\pm$ 0.132 & 0.191 $\pm$ 0.087 & 0.692 $\pm$ 0.023 & 0.626 $\pm$ 0.035 & 0.619 $\pm$ 0.022 & 0.616 $\pm$ 0.028 \\
\midrule
\multirow{5}{*}{\begin{sideways}Law\end{sideways}} & DEMV & 0.066 $\pm$ 0.023 & 0.100 $\pm$ 0.052 & 0.071 $\pm$ 0.027 & 0.661 $\pm$ 0.023 & 0.639 $\pm$ 0.021 & 0.648 $\pm$ 0.025 & 0.642 $\pm$ 0.022 \\\cmidrule{2-9}
 & GEG-CP & 0.046 $\pm$ 0.031 & 0.103 $\pm$ 0.031 & 0.059 $\pm$ 0.029 & \textbf{0.683 $\pm$ 0.010} & \textbf{0.657 $\pm$ 0.010} & \textbf{0.671 $\pm$ 0.010} & \textbf{0.660 $\pm$ 0.010} \\
 & GEG-SP & \textbf{0.018 $\pm$ 0.016} & 0.097 $\pm$ 0.039 & 0.063 $\pm$ 0.019 & \textit{0.673 $\pm$ 0.015} & 0.649 $\pm$ 0.014 & \textit{0.661 $\pm$ 0.014} & \textit{0.652 $\pm$ 0.014} \\
 & GEG-EO & 0.047 $\pm$ 0.025 & 0.078 $\pm$ 0.040 & \textit{0.051 $\pm$ 0.026} & \textbf{0.688 $\pm$ 0.012} & \textbf{0.661 $\pm$ 0.011} & \textbf{0.675 $\pm$ 0.011} & \textbf{0.661 $\pm$ 0.010} \\
\midrule
\multirow{5}{*}{\begin{sideways}Obesity\end{sideways}} & DEMV & 0.049 $\pm$ 0.045 & 0.513 $\pm$ 0.117 & 0.261 $\pm$ 0.050 & 0.665 $\pm$ 0.047 & 0.663 $\pm$ 0.040 & 0.663 $\pm$ 0.035 & 0.654 $\pm$ 0.036 \\\cmidrule{2-9}
 & GEG-CP & 0.061 $\pm$ 0.046 & \textbf{0.316 $\pm$ 0.138} & \textbf{0.195 $\pm$ 0.060} & 0.623 $\pm$ 0.105 & 0.613 $\pm$ 0.092 & 0.616 $\pm$ 0.111 & 0.601 $\pm$ 0.115 \\
 & GEG-SP & 0.050 $\pm$ 0.042 & 0.447 $\pm$ 0.095 & \textbf{0.217 $\pm$ 0.052} & \textit{0.687 $\pm$ 0.035} & \textit{0.685 $\pm$ 0.027} & \textbf{0.686 $\pm$ 0.024} & \textit{0.668 $\pm$ 0.028} \\
 & GEG-EO & 0.063 $\pm$ 0.048 & \textbf{0.368 $\pm$ 0.118} & \textbf{0.209 $\pm$ 0.051} & 0.648 $\pm$ 0.025 & 0.623 $\pm$ 0.031 & 0.642 $\pm$ 0.031 & 0.623 $\pm$ 0.025 \\
\midrule
\multirow{5}{*}{\begin{sideways}Park\end{sideways}} & DEMV & 0.169 $\pm$ 0.070 & 0.285 $\pm$ 0.090 & 0.185 $\pm$ 0.077 & 0.467 $\pm$ 0.019 & 0.387 $\pm$ 0.059 & 0.396 $\pm$ 0.015 & 0.350 $\pm$ 0.013 \\\cmidrule{2-9}
 & GEG-CP & \textbf{0.083 $\pm$ 0.064} & \textbf{0.139 $\pm$ 0.084} & \textbf{0.093 $\pm$ 0.061} & \textit{0.479 $\pm$ 0.018} & 0.329 $\pm$ 0.044 & 0.402 $\pm$ 0.015 & 0.348 $\pm$ 0.019 \\
 & GEG-SP & \textbf{0.053 $\pm$ 0.027} & \textbf{0.137 $\pm$ 0.060} & \textbf{0.068 $\pm$ 0.025} & \textbf{0.480 $\pm$ 0.017} & 0.317 $\pm$ 0.011 & 0.400 $\pm$ 0.012 & 0.343 $\pm$ 0.012 \\
 & GEG-EO & 0.131 $\pm$ 0.051 & \textbf{0.156 $\pm$ 0.062} & 0.160 $\pm$ 0.064 & \textbf{0.479 $\pm$ 0.014} & 0.384 $\pm$ 0.075 & \textit{0.407 $\pm$ 0.016} & 0.365 $\pm$ 0.033 \\
\midrule
\multirow{5}{*}{\begin{sideways}Wine\end{sideways}} & DEMV & 0.137 $\pm$ 0.055 & 0.190 $\pm$ 0.066 & 0.150 $\pm$ 0.057 & 0.457 $\pm$ 0.016 & 0.258 $\pm$ 0.068 & 0.261 $\pm$ 0.008 & 0.205 $\pm$ 0.015 \\\cmidrule{2-9}
 & GEG-CP & \textbf{0.053 $\pm$ 0.038} & \textbf{0.083 $\pm$ 0.049} & \textbf{0.059 $\pm$ 0.040} & 0.457 $\pm$ 0.016 & 0.269 $\pm$ 0.082 & 0.259 $\pm$ 0.003 & 0.195 $\pm$ 0.007 \\
 & GEG-SP & \textbf{0.057 $\pm$ 0.023} & \textbf{0.087 $\pm$ 0.029} & \textbf{0.065 $\pm$ 0.023} & 0.459 $\pm$ 0.014 & 0.260 $\pm$ 0.052 & 0.261 $\pm$ 0.006 & 0.199 $\pm$ 0.014 \\
 & GEG-EO & \textbf{0.024 $\pm$ 0.022} & \textbf{0.058 $\pm$ 0.024} & \textbf{0.034 $\pm$ 0.023} & \textit{0.464 $\pm$ 0.015} & 0.279 $\pm$ 0.059 & 0.266 $\pm$ 0.007 & 0.208 $\pm$ 0.017 \\
\bottomrule
\end{tabular}
    }
\end{table*}

\begin{table*}[tb]
    \centering
    \caption{\revised{RQ$_3$: Comparison with the Blackbox post-processing approach for multi-class classification.}}
    \label{tab:rq3_blackbox}
    \resizebox{\textwidth}{!}{
    \begin{tabular}{ll|ccc|cccc}
\toprule
& \textbf{Approach} & \textbf{SP} & \textbf{EO} & \textbf{AO} & \textbf{Accuracy} & \textbf{Precision} & \textbf{Recall} & \textbf{F1} \\
\midrule\midrule
\multirow{5}{*}{\begin{sideways}CMC\end{sideways}} 
 & Blackbox & 0.169 $\pm$ 0.131 & 0.687 $\pm$ 0.130 & 0.654 $\pm$ 0.082 & 0.384 $\pm$ 0.071 & 0.362 $\pm$ 0.071 & 0.364 $\pm$ 0.073 & 0.352 $\pm$ 0.072 \\\cmidrule{2-9}
 & GEG-CP & \textbf{0.053 $\pm$ 0.040} & \textbf{0.126 $\pm$ 0.063} & \textbf{0.078 $\pm$ 0.062} & \textbf{0.610 $\pm$ 0.030} & \textbf{0.591 $\pm$ 0.037} & \textbf{0.565 $\pm$ 0.023} & \textbf{0.558 $\pm$ 0.029} \\
 & GEG-SP & 0.075 $\pm$ 0.072 & \textbf{0.190 $\pm$ 0.062} & \textbf{0.125 $\pm$ 0.058} & \textbf{0.609 $\pm$ 0.033} & \textbf{0.584 $\pm$ 0.044} & \textbf{0.561 $\pm$ 0.033} & \textbf{0.554 $\pm$ 0.037} \\
 & GEG-EO & \textbf{0.048 $\pm$ 0.032} & \textbf{0.154 $\pm$ 0.070} & \textbf{0.082 $\pm$ 0.040} & \textbf{0.605 $\pm$ 0.031} & \textbf{0.584 $\pm$ 0.040} & \textbf{0.561 $\pm$ 0.025} & \textbf{0.555 $\pm$ 0.027} \\
\midrule
\multirow{5}{*}{\begin{sideways}Crime\end{sideways}} 
 & Blackbox & 0.403 $\pm$ 0.084 & 0.648 $\pm$ 0.153 & 0.561 $\pm$ 0.088 & 0.207 $\pm$ 0.040 & 0.197 $\pm$ 0.065 & 0.202 $\pm$ 0.042 & 0.174 $\pm$ 0.040 \\\cmidrule{2-9}
 & GEG-CP & \textbf{0.233 $\pm$ 0.118} & \textbf{0.318 $\pm$ 0.157} & \textbf{0.190 $\pm$ 0.095} & \textbf{0.426 $\pm$ 0.044} & \textbf{0.406 $\pm$ 0.038} & \textbf{0.402 $\pm$ 0.048} & \textbf{0.384 $\pm$ 0.050} \\
 & GEG-SP & \textbf{0.193 $\pm$ 0.132} & \textbf{0.422 $\pm$ 0.196} & \textbf{0.242 $\pm$ 0.106} & \textbf{0.343 $\pm$ 0.070} & \textbf{0.332 $\pm$ 0.093} & \textbf{0.319 $\pm$ 0.058} & \textbf{0.281 $\pm$ 0.078} \\
 & GEG-EO & \textbf{0.296 $\pm$ 0.063} & \textbf{0.393 $\pm$ 0.175} & \textbf{0.244 $\pm$ 0.096} & \textbf{0.447 $\pm$ 0.026} & \textbf{0.409 $\pm$ 0.043} & \textbf{0.422 $\pm$ 0.025} & \textbf{0.392 $\pm$ 0.027} \\
\midrule
\multirow{5}{*}{\begin{sideways}Drug\end{sideways}} 
 & Blackbox & 0.342 $\pm$ 0.161 & 0.574 $\pm$ 0.228 & 0.437 $\pm$ 0.173 & 0.458 $\pm$ 0.061 & 0.396 $\pm$ 0.080 & 0.410 $\pm$ 0.040 & 0.384 $\pm$ 0.060 \\\cmidrule{2-9}
 & GEG-CP & \textbf{0.128 $\pm$ 0.097} & \textbf{0.280 $\pm$ 0.083} & \textbf{0.157 $\pm$ 0.049} & \textbf{0.682 $\pm$ 0.025} & \textbf{0.612 $\pm$ 0.027} & \textbf{0.607 $\pm$ 0.016} & \textbf{0.602 $\pm$ 0.019} \\
 & GEG-SP & \textbf{0.082 $\pm$ 0.073} & \textbf{0.300 $\pm$ 0.140} & \textbf{0.143 $\pm$ 0.065} & \textbf{0.682 $\pm$ 0.030} & \textbf{0.613 $\pm$ 0.037} & \textbf{0.605 $\pm$ 0.027} & \textbf{0.606 $\pm$ 0.031} \\
 & GEG-EO & \textbf{0.194 $\pm$ 0.139} & \textbf{0.324 $\pm$ 0.132} & \textbf{0.191 $\pm$ 0.087} & \textbf{0.692 $\pm$ 0.023} & \textbf{0.626 $\pm$ 0.035} & \textbf{0.619 $\pm$ 0.022} & \textbf{0.616 $\pm$ 0.028} \\
\midrule
\multirow{5}{*}{\begin{sideways}Law\end{sideways}}
 & Blackbox & 0.205 $\pm$ 0.025 & 0.234 $\pm$ 0.037 & 0.215 $\pm$ 0.027 & 0.431 $\pm$ 0.045 & 0.476 $\pm$ 0.031 & 0.452 $\pm$ 0.039 & 0.432 $\pm$ 0.050 \\\cmidrule{2-9}
 & GEG-CP & \textbf{0.046 $\pm$ 0.031} & \textbf{0.103 $\pm$ 0.031} & \textbf{0.059 $\pm$ 0.029} & \textbf{0.683 $\pm$ 0.010} & \textbf{0.657 $\pm$ 0.010} & \textbf{0.671 $\pm$ 0.010} & \textbf{0.660 $\pm$ 0.010} \\
 & GEG-SP & \textbf{0.018 $\pm$ 0.016} & \textbf{0.097 $\pm$ 0.039} & \textbf{0.063 $\pm$ 0.019} & \textbf{0.673 $\pm$ 0.015} & \textbf{0.649 $\pm$ 0.014} & \textbf{0.661 $\pm$ 0.014} & \textbf{0.652 $\pm$ 0.014} \\
 & GEG-EO & \textbf{0.047 $\pm$ 0.025} & \textbf{0.078 $\pm$ 0.040} & \textbf{0.051 $\pm$ 0.026} & \textbf{0.688 $\pm$ 0.012} & \textbf{0.661 $\pm$ 0.011} & \textbf{0.675 $\pm$ 0.011} & \textbf{0.661 $\pm$ 0.010} \\
\midrule
\multirow{5}{*}{\begin{sideways}Obesity\end{sideways}}
 & Blackbox & 0.251 $\pm$ 0.065 & 0.647 $\pm$ 0.097 & 0.431 $\pm$ 0.054 & 0.366 $\pm$ 0.090 & 0.356 $\pm$ 0.097 & 0.356 $\pm$ 0.076 & 0.336 $\pm$ 0.073 \\\cmidrule{2-9}
 & GEG-CP & \textbf{0.061 $\pm$ 0.046} & \textbf{0.316 $\pm$ 0.138} & \textbf{0.195 $\pm$ 0.060} & \textbf{0.623 $\pm$ 0.105} & \textbf{0.613 $\pm$ 0.092} & \textbf{0.616 $\pm$ 0.111} & \textbf{0.601 $\pm$ 0.115} \\
 & GEG-SP & \textbf{0.050 $\pm$ 0.042} & \textbf{0.447 $\pm$ 0.095} & \textbf{0.217 $\pm$ 0.052} & \textbf{0.687 $\pm$ 0.035} & \textbf{0.685 $\pm$ 0.027} & \textbf{0.686 $\pm$ 0.024} & \textbf{0.668 $\pm$ 0.028} \\
 & GEG-EO & \textbf{0.063 $\pm$ 0.048} & \textbf{0.368 $\pm$ 0.118} & \textbf{0.209 $\pm$ 0.051} & \textbf{0.648 $\pm$ 0.025} & \textbf{0.623 $\pm$ 0.031} & \textbf{0.642 $\pm$ 0.031} & \textbf{0.623 $\pm$ 0.025} \\
\midrule
\multirow{5}{*}{\begin{sideways}Park\end{sideways}}
 & Blackbox & 0.220 $\pm$ 0.092 & 0.239 $\pm$ 0.078 & 0.223 $\pm$ 0.080 & 0.334 $\pm$ 0.041 & 0.348 $\pm$ 0.054 & 0.343 $\pm$ 0.036 & 0.288 $\pm$ 0.031 \\\cmidrule{2-9}
 & GEG-CP & \textbf{0.083 $\pm$ 0.064} & \textbf{0.139 $\pm$ 0.084} & \textbf{0.093 $\pm$ 0.061} & \textbf{0.479 $\pm$ 0.018} & 0.329 $\pm$ 0.044 & \textbf{0.402 $\pm$ 0.015} & \textbf{0.348 $\pm$ 0.019} \\
 & GEG-SP & \textbf{0.053 $\pm$ 0.027} & \textbf{0.137 $\pm$ 0.060} & \textbf{0.068 $\pm$ 0.025} & \textbf{0.480 $\pm$ 0.017} & 0.317 $\pm$ 0.011 & \textbf{0.400 $\pm$ 0.012} & \textbf{0.343 $\pm$ 0.012} \\
 & GEG-EO & \textbf{0.131 $\pm$ 0.051} & \textbf{0.156 $\pm$ 0.062} & 0.160 $\pm$ 0.064 & \textbf{0.479 $\pm$ 0.014} & 0.384 $\pm$ 0.075 & \textbf{0.407 $\pm$ 0.016} & \textbf{0.365 $\pm$ 0.033} \\
\midrule
\multirow{5}{*}{\begin{sideways}Wine\end{sideways}}
 & Blackbox & 0.039 $\pm$ 0.044 & 0.297 $\pm$ 0.192 & 0.271 $\pm$ 0.116 & 0.244 $\pm$ 0.088 & 0.240 $\pm$ 0.031 & 0.257 $\pm$ 0.027 & 0.172 $\pm$ 0.041 \\\cmidrule{2-9}
 & GEG-CP & 0.053 $\pm$ 0.038 & \textbf{0.083 $\pm$ 0.049} & \textbf{0.059 $\pm$ 0.040} & \textbf{0.457 $\pm$ 0.016} & 0.269 $\pm$ 0.082 & 0.259 $\pm$ 0.003 & 0.195 $\pm$ 0.007 \\
 & GEG-SP & 0.057 $\pm$ 0.023 & \textbf{0.087 $\pm$ 0.029} & \textbf{0.065 $\pm$ 0.023} & \textbf{0.459 $\pm$ 0.014} & 0.260 $\pm$ 0.052 & 0.261 $\pm$ 0.006 & \textbf{0.199 $\pm$ 0.014} \\
 & GEG-EO & 0.024 $\pm$ 0.022 & \textbf{0.058 $\pm$ 0.024} & \textbf{0.034 $\pm$ 0.023} & \textbf{0.464 $\pm$ 0.015} & 0.279 $\pm$ 0.059 & 0.266 $\pm$ 0.007 & \textbf{0.208 $\pm$ 0.017} \\
\bottomrule
\end{tabular}
    }
\end{table*}

\revised{Table \ref{tab:rq3_DEMV} shows the results of the comparison between the three versions of \tool and the DEMV baseline for multi-class classification, while Table \ref{tab:rq3_blackbox} reports the comparison between the three versions of \tool and the Blackbox post-processing approach.}

\subsection{\revised{RQ$_5$ Detailed Results}}

\revised{Table \ref{tab:rq4_rf} presents the results of applying \tool with an RF classifier, while Table \ref{tab:rq4_gb} reports the results of applying \tool with a GB classification method.} Figures \ref{fig:rq5_fairness} and \ref{fig:rq5_effectiveness} show the comparison of single fairness and effectiveness scores, respectively.

\begin{table*}[ht]
    \centering
    \caption{RQ$_4$: Results obtained with RF base classifier.}
    \label{tab:rq4_rf}
      \resizebox{\textwidth}{!}{
        \begin{tabular}{ll|cccc|cccccc}
        \toprule
        & \textbf{Approach} & \textbf{SP} & \textbf{EO} & \textbf{AO} & \textbf{Accuracy} & \textbf{Precision} & \textbf{Recall} & \textbf{F1} \\
        \midrule\midrule
        \multirow{6}{*}{\begin{sideways}CMC\end{sideways}} & Baseline & 0.132 $\pm$ 0.077 & 0.131 $\pm$ 0.070 & 0.064 $\pm$ 0.035 & 0.976 $\pm$ 0.014 & 0.974 $\pm$ 0.012 & 0.972 $\pm$ 0.015 & 0.973 $\pm$ 0.014 \\
 & Blackbox & 0.102 $\pm$ 0.074 & 0.332 $\pm$ 0.181 & 0.204 $\pm$ 0.109 & 0.882 $\pm$ 0.047 & 0.883 $\pm$ 0.058 & 0.869 $\pm$ 0.064 & 0.868 $\pm$ 0.060 \\
 & DEMV & 0.123 $\pm$ 0.088 & \textit{0.088 $\pm$ 0.041} & \textit{0.044 $\pm$ 0.021} & 0.978 $\pm$ 0.014 & 0.978 $\pm$ 0.012 & 0.975 $\pm$ 0.015 & 0.976 $\pm$ 0.013 \\\cmidrule{2-9}
 & GEG-CP & 0.138 $\pm$ 0.055 & 0.173 $\pm$ 0.091 & 0.093 $\pm$ 0.048 & 0.800 $\pm$ 0.059 & 0.836 $\pm$ 0.030 & 0.836 $\pm$ 0.044 & 0.795 $\pm$ 0.061 \\
 & GEG-SP & \textbf{0.069 $\pm$ 0.036} & 0.193 $\pm$ 0.043 & 0.107 $\pm$ 0.036 & 0.872 $\pm$ 0.024 & 0.869 $\pm$ 0.028 & 0.883 $\pm$ 0.022 & 0.862 $\pm$ 0.030 \\
 & GEG-EO & 0.153 $\pm$ 0.080 & 0.104 $\pm$ 0.049 & 0.052 $\pm$ 0.024 & \textbf{0.982 $\pm$ 0.010} & \textbf{0.983 $\pm$ 0.008} & \textbf{0.980 $\pm$ 0.012} & \textbf{0.981 $\pm$ 0.010} \\
\midrule
\multirow{6}{*}{\begin{sideways}Crime\end{sideways}} & Baseline & 0.422 $\pm$ 0.055 & 0.491 $\pm$ 0.226 & 0.347 $\pm$ 0.118 & 0.518 $\pm$ 0.036 & 0.490 $\pm$ 0.035 & 0.494 $\pm$ 0.026 & 0.482 $\pm$ 0.029 \\
 & Blackbox & 0.411 $\pm$ 0.070 & 0.652 $\pm$ 0.173 & 0.566 $\pm$ 0.074 & 0.192 $\pm$ 0.056 & 0.189 $\pm$ 0.045 & 0.186 $\pm$ 0.049 & 0.166 $\pm$ 0.040 \\
 & DEMV & \textbf{0.346 $\pm$ 0.039} & 0.456 $\pm$ 0.192 & \textit{0.307 $\pm$ 0.097} & 0.487 $\pm$ 0.026 & 0.473 $\pm$ 0.026 & 0.467 $\pm$ 0.026 & 0.460 $\pm$ 0.026 \\\cmidrule{2-9}
 & GEG-CP & \textbf{0.180 $\pm$ 0.048} & 0.412 $\pm$ 0.095 & \textbf{0.222 $\pm$ 0.060} & 0.414 $\pm$ 0.026 & 0.436 $\pm$ 0.036 & 0.383 $\pm$ 0.023 & 0.363 $\pm$ 0.027 \\
 & GEG-SP & \textbf{0.148 $\pm$ 0.064} & 0.412 $\pm$ 0.132 & \textbf{0.188 $\pm$ 0.060} & 0.432 $\pm$ 0.033 & 0.430 $\pm$ 0.042 & 0.406 $\pm$ 0.023 & 0.388 $\pm$ 0.032 \\
 & GEG-EO & 0.423 $\pm$ 0.050 & 0.514 $\pm$ 0.191 & 0.359 $\pm$ 0.110 & 0.496 $\pm$ 0.031 & 0.465 $\pm$ 0.030 & 0.475 $\pm$ 0.021 & 0.462 $\pm$ 0.026 \\
\midrule
\multirow{6}{*}{\begin{sideways}Drug\end{sideways}} & Baseline & 0.160 $\pm$ 0.102 & 0.380 $\pm$ 0.105 & 0.223 $\pm$ 0.068 & 0.681 $\pm$ 0.021 & 0.608 $\pm$ 0.024 & 0.599 $\pm$ 0.020 & 0.600 $\pm$ 0.019 \\
 & Blackbox & 0.357 $\pm$ 0.135 & 0.647 $\pm$ 0.110 & 0.504 $\pm$ 0.148 & 0.435 $\pm$ 0.126 & 0.402 $\pm$ 0.108 & 0.415 $\pm$ 0.073 & 0.384 $\pm$ 0.090 \\
 & DEMV & \textbf{0.088 $\pm$ 0.055} & 0.322 $\pm$ 0.111 & \textbf{0.157 $\pm$ 0.085} & 0.667 $\pm$ 0.029 & 0.594 $\pm$ 0.025 & 0.582 $\pm$ 0.023 & 0.584 $\pm$ 0.024 \\\cmidrule{2-9}
 & GEG-CP & \textit{0.113 $\pm$ 0.091} & \textbf{0.209 $\pm$ 0.072} & \textbf{0.127 $\pm$ 0.051} & 0.642 $\pm$ 0.026 & 0.553 $\pm$ 0.043 & 0.571 $\pm$ 0.034 & 0.532 $\pm$ 0.032 \\
 & GEG-SP & \textit{0.089 $\pm$ 0.064} & 0.336 $\pm$ 0.049 & 0.205 $\pm$ 0.047 & 0.602 $\pm$ 0.028 & 0.553 $\pm$ 0.031 & 0.557 $\pm$ 0.035 & 0.540 $\pm$ 0.027 \\
 & GEG-EO & 0.161 $\pm$ 0.109 & 0.344 $\pm$ 0.080 & 0.202 $\pm$ 0.050 & 0.684 $\pm$ 0.033 & 0.617 $\pm$ 0.033 & 0.607 $\pm$ 0.032 & 0.607 $\pm$ 0.030 \\
\midrule
\multirow{6}{*}{\begin{sideways}Law\end{sideways}} & Baseline & 0.162 $\pm$ 0.024 & 0.151 $\pm$ 0.035 & 0.078 $\pm$ 0.019 & 0.971 $\pm$ 0.005 & 0.973 $\pm$ 0.004 & 0.972 $\pm$ 0.005 & 0.973 $\pm$ 0.005 \\
 & Blackbox & 0.162 $\pm$ 0.021 & 0.270 $\pm$ 0.045 & 0.141 $\pm$ 0.024 & 0.888 $\pm$ 0.022 & 0.901 $\pm$ 0.021 & 0.887 $\pm$ 0.021 & 0.888 $\pm$ 0.022 \\
 & DEMV & \textbf{0.143 $\pm$ 0.026} & \textbf{0.078 $\pm$ 0.043} & \textbf{0.039 $\pm$ 0.021} & 0.973 $\pm$ 0.004 & 0.974 $\pm$ 0.004 & 0.975 $\pm$ 0.004 & 0.974 $\pm$ 0.004 \\\cmidrule{2-9}
 & GEG-CP & \textbf{0.081 $\pm$ 0.027} & \textbf{0.124 $\pm$ 0.037} & \textbf{0.032 $\pm$ 0.016} & 0.915 $\pm$ 0.009 & 0.920 $\pm$ 0.008 & 0.927 $\pm$ 0.008 & 0.920 $\pm$ 0.008 \\
 & GEG-SP & \textbf{0.038 $\pm$ 0.026} & 0.177 $\pm$ 0.032 & 0.072 $\pm$ 0.021 & 0.952 $\pm$ 0.009 & 0.953 $\pm$ 0.008 & 0.955 $\pm$ 0.008 & 0.953 $\pm$ 0.008 \\
 & GEG-EO & 0.161 $\pm$ 0.025 & 0.145 $\pm$ 0.043 & 0.076 $\pm$ 0.022 & 0.971 $\pm$ 0.007 & 0.972 $\pm$ 0.006 & 0.972 $\pm$ 0.007 & 0.972 $\pm$ 0.006 \\
\midrule
\multirow{6}{*}{\begin{sideways}Obesity\end{sideways}} & Baseline & 0.056 $\pm$ 0.046 & 0.296 $\pm$ 0.191 & 0.153 $\pm$ 0.098 & 0.928 $\pm$ 0.017 & 0.932 $\pm$ 0.015 & 0.925 $\pm$ 0.017 & 0.926 $\pm$ 0.016 \\
 & Blackbox & 0.137 $\pm$ 0.082 & 0.451 $\pm$ 0.094 & 0.263 $\pm$ 0.061 & 0.680 $\pm$ 0.062 & 0.726 $\pm$ 0.091 & 0.674 $\pm$ 0.064 & 0.658 $\pm$ 0.075 \\
 & DEMV & 0.057 $\pm$ 0.049 & 0.325 $\pm$ 0.182 & 0.167 $\pm$ 0.092 & 0.915 $\pm$ 0.021 & 0.922 $\pm$ 0.017 & 0.913 $\pm$ 0.022 & 0.914 $\pm$ 0.019 \\\cmidrule{2-9}
 & GEG-CP & 0.058 $\pm$ 0.038 & 0.301 $\pm$ 0.190 & 0.154 $\pm$ 0.093 & 0.929 $\pm$ 0.012 & 0.935 $\pm$ 0.009 & 0.927 $\pm$ 0.013 & 0.927 $\pm$ 0.011 \\
 & GEG-SP & 0.050 $\pm$ 0.026 & 0.336 $\pm$ 0.164 & 0.173 $\pm$ 0.079 & 0.904 $\pm$ 0.018 & 0.918 $\pm$ 0.016 & 0.902 $\pm$ 0.019 & 0.903 $\pm$ 0.019 \\
 & GEG-EO & 0.061 $\pm$ 0.043 & 0.309 $\pm$ 0.193 & 0.157 $\pm$ 0.097 & 0.928 $\pm$ 0.014 & 0.933 $\pm$ 0.010 & 0.926 $\pm$ 0.014 & 0.926 $\pm$ 0.013 \\
\midrule
\multirow{6}{*}{\begin{sideways}Park\end{sideways}} & Baseline & 0.034 $\pm$ 0.022 & 0.200 $\pm$ 0.076 & 0.088 $\pm$ 0.041 & 0.854 $\pm$ 0.010 & 0.865 $\pm$ 0.007 & 0.851 $\pm$ 0.012 & 0.857 $\pm$ 0.010 \\
 & Blackbox & 0.187 $\pm$ 0.038 & 0.338 $\pm$ 0.100 & 0.231 $\pm$ 0.047 & 0.641 $\pm$ 0.032 & 0.658 $\pm$ 0.032 & 0.629 $\pm$ 0.034 & 0.634 $\pm$ 0.032 \\
 & DEMV & 0.032 $\pm$ 0.020 & 0.211 $\pm$ 0.067 & 0.092 $\pm$ 0.038 & 0.858 $\pm$ 0.017 & 0.870 $\pm$ 0.015 & 0.855 $\pm$ 0.017 & 0.861 $\pm$ 0.016 \\\cmidrule{2-9}
 & GEG-CP & 0.031 $\pm$ 0.020 & 0.206 $\pm$ 0.058 & 0.088 $\pm$ 0.035 & 0.853 $\pm$ 0.013 & 0.864 $\pm$ 0.011 & 0.850 $\pm$ 0.015 & 0.855 $\pm$ 0.013 \\
 & GEG-SP & 0.050 $\pm$ 0.041 & 0.220 $\pm$ 0.064 & 0.132 $\pm$ 0.041 & 0.824 $\pm$ 0.008 & 0.832 $\pm$ 0.007 & 0.823 $\pm$ 0.011 & 0.826 $\pm$ 0.008 \\
 & GEG-EO & 0.036 $\pm$ 0.021 & 0.206 $\pm$ 0.070 & 0.089 $\pm$ 0.037 & 0.853 $\pm$ 0.006 & 0.863 $\pm$ 0.003 & 0.851 $\pm$ 0.009 & 0.856 $\pm$ 0.006 \\
\midrule
\multirow{6}{*}{\begin{sideways}Wine\end{sideways}} & Baseline & 0.119 $\pm$ 0.047 & 0.187 $\pm$ 0.042 & 0.132 $\pm$ 0.027 & 0.710 $\pm$ 0.014 & 0.753 $\pm$ 0.063 & 0.550 $\pm$ 0.021 & 0.581 $\pm$ 0.029 \\
 & Blackbox & 0.108 $\pm$ 0.084 & 0.604 $\pm$ 0.075 & 0.479 $\pm$ 0.069 & 0.423 $\pm$ 0.065 & 0.342 $\pm$ 0.019 & 0.331 $\pm$ 0.019 & 0.315 $\pm$ 0.032 \\
 & DEMV & \textbf{0.085 $\pm$ 0.045} & 0.177 $\pm$ 0.060 & \textbf{0.103 $\pm$ 0.025} & 0.707 $\pm$ 0.012 & 0.746 $\pm$ 0.048 & 0.552 $\pm$ 0.017 & 0.584 $\pm$ 0.025 \\\cmidrule{2-9}
 & GEG-CP & 0.120 $\pm$ 0.041 & 0.180 $\pm$ 0.030 & 0.120 $\pm$ 0.026 & 0.705 $\pm$ 0.018 & 0.706 $\pm$ 0.072 & 0.547 $\pm$ 0.025 & 0.575 $\pm$ 0.034 \\
 & GEG-SP & \textbf{0.064 $\pm$ 0.043} & 0.199 $\pm$ 0.053 & \textbf{0.105 $\pm$ 0.026} & 0.706 $\pm$ 0.016 & 0.745 $\pm$ 0.087 & 0.546 $\pm$ 0.027 & 0.577 $\pm$ 0.039 \\
 & GEG-EO & 0.109 $\pm$ 0.040 & 0.184 $\pm$ 0.049 & \textit{0.118 $\pm$ 0.028} & 0.708 $\pm$ 0.012 & 0.747 $\pm$ 0.039 & 0.551 $\pm$ 0.022 & 0.582 $\pm$ 0.027 \\
        \bottomrule
        \end{tabular}
       
        }
\end{table*}

\begin{table*}[tb]
    \centering
    \caption{RQ$_4$: Results obtained with GB base classifier.}
    \label{tab:rq4_gb}
     \resizebox{\textwidth}{!}{
    \begin{tabular}{ll|ccc|cccc}
\toprule
& \textbf{Approach} & \textbf{SP} & \textbf{EO} & \textbf{AO} & \textbf{Accuracy} & \textbf{Precision} & \textbf{Recall} & \textbf{F1} \\
\midrule\midrule
\multirow{6}{*}{\begin{sideways}CMC\end{sideways}} & Baseline & 0.148 $\pm$ 0.082 & 0.014 $\pm$ 0.017 & 0.007 $\pm$ 0.008 & 0.996 $\pm$ 0.005 & 0.995 $\pm$ 0.006 & 0.996 $\pm$ 0.005 & 0.995 $\pm$ 0.005 \\
 & Blackbox & 0.138 $\pm$ 0.079 & 0.062 $\pm$ 0.090 & 0.039 $\pm$ 0.056 & 0.981 $\pm$ 0.023 & 0.980 $\pm$ 0.024 & 0.979 $\pm$ 0.026 & 0.979 $\pm$ 0.025 \\
 & DEMV & 0.148 $\pm$ 0.082 & 0.014 $\pm$ 0.017 & 0.007 $\pm$ 0.008 & 0.996 $\pm$ 0.005 & 0.995 $\pm$ 0.006 & 0.996 $\pm$ 0.005 & 0.995 $\pm$ 0.005 \\\cmidrule{2-9}
 & GEG-CP & 0.128 $\pm$ 0.087 & 0.143 $\pm$ 0.080 & 0.085 $\pm$ 0.035 & 0.790 $\pm$ 0.058 & 0.839 $\pm$ 0.031 & 0.832 $\pm$ 0.039 & 0.787 $\pm$ 0.057 \\
 & GEG-SP & 0.096 $\pm$ 0.066 & 0.197 $\pm$ 0.061 & 0.098 $\pm$ 0.031 & 0.886 $\pm$ 0.041 & 0.888 $\pm$ 0.035 & 0.901 $\pm$ 0.035 & 0.878 $\pm$ 0.044 \\
 & GEG-EO & 0.148 $\pm$ 0.082 & 0.014 $\pm$ 0.017 & 0.007 $\pm$ 0.008 & 0.996 $\pm$ 0.005 & 0.995 $\pm$ 0.006 & 0.996 $\pm$ 0.005 & 0.995 $\pm$ 0.005 \\
\midrule
\multirow{6}{*}{\begin{sideways}Crime\end{sideways}} & Baseline & 0.410 $\pm$ 0.028 & 0.574 $\pm$ 0.152 & 0.389 $\pm$ 0.070 & 0.491 $\pm$ 0.050 & 0.462 $\pm$ 0.054 & 0.471 $\pm$ 0.045 & 0.462 $\pm$ 0.049 \\
 & Blackbox & 0.392 $\pm$ 0.085 & 0.678 $\pm$ 0.128 & 0.576 $\pm$ 0.061 & 0.195 $\pm$ 0.043 & 0.188 $\pm$ 0.072 & 0.191 $\pm$ 0.042 & 0.165 $\pm$ 0.033 \\
 & DEMV & \textbf{0.335 $\pm$ 0.051} & \textit{0.494 $\pm$ 0.138} & \textbf{0.313 $\pm$ 0.086} & 0.474 $\pm$ 0.035 & 0.462 $\pm$ 0.032 & 0.457 $\pm$ 0.035 & 0.454 $\pm$ 0.032 \\\cmidrule{2-9}
 & GEG-CP & \textbf{0.181 $\pm$ 0.063} & \textbf{0.371 $\pm$ 0.211} & \textbf{0.215 $\pm$ 0.075} & 0.374 $\pm$ 0.043 & 0.416 $\pm$ 0.044 & 0.349 $\pm$ 0.029 & 0.328 $\pm$ 0.037 \\
 & GEG-SP & \textbf{0.055 $\pm$ 0.027} & \textbf{0.385 $\pm$ 0.079} & \textbf{0.223 $\pm$ 0.038} & 0.428 $\pm$ 0.028 & 0.434 $\pm$ 0.030 & 0.410 $\pm$ 0.021 & 0.406 $\pm$ 0.025 \\
 & GEG-EO & \textbf{0.331 $\pm$ 0.041} & \textit{0.484 $\pm$ 0.188} & \textbf{0.314 $\pm$ 0.094} & 0.483 $\pm$ 0.043 & 0.458 $\pm$ 0.042 & 0.463 $\pm$ 0.037 & 0.454 $\pm$ 0.039 \\
\midrule
\multirow{6}{*}{\begin{sideways}Drug\end{sideways}} & Baseline & 0.168 $\pm$ 0.128 & 0.357 $\pm$ 0.112 & 0.220 $\pm$ 0.076 & 0.676 $\pm$ 0.032 & 0.606 $\pm$ 0.033 & 0.601 $\pm$ 0.028 & 0.599 $\pm$ 0.030 \\
 & Blackbox & 0.335 $\pm$ 0.145 & 0.566 $\pm$ 0.243 & 0.436 $\pm$ 0.170 & 0.422 $\pm$ 0.102 & 0.389 $\pm$ 0.089 & 0.401 $\pm$ 0.061 & 0.360 $\pm$ 0.081 \\
 & DEMV & 0.112 $\pm$ 0.076 & 0.393 $\pm$ 0.149 & 0.208 $\pm$ 0.073 & 0.675 $\pm$ 0.031 & 0.607 $\pm$ 0.046 & 0.596 $\pm$ 0.035 & 0.597 $\pm$ 0.039 \\\cmidrule{2-9}
 & GEG-CP & \textit{0.085 $\pm$ 0.078} & 0.336 $\pm$ 0.158 & \textbf{0.169 $\pm$ 0.071} & 0.657 $\pm$ 0.042 & 0.579 $\pm$ 0.046 & 0.577 $\pm$ 0.039 & 0.573 $\pm$ 0.042 \\
 & GEG-SP & \textbf{0.058 $\pm$ 0.040} & \textbf{0.274 $\pm$ 0.104} & \textbf{0.134 $\pm$ 0.060} & 0.677 $\pm$ 0.027 & 0.602 $\pm$ 0.032 & 0.596 $\pm$ 0.024 & 0.595 $\pm$ 0.027 \\
 & GEG-EO & 0.111 $\pm$ 0.070 & 0.323 $\pm$ 0.188 & 0.171 $\pm$ 0.090 & 0.672 $\pm$ 0.041 & 0.601 $\pm$ 0.047 & 0.596 $\pm$ 0.039 & 0.593 $\pm$ 0.042 \\
\midrule
\multirow{6}{*}{\begin{sideways}Law\end{sideways}} & Baseline & 0.134 $\pm$ 0.024 & 0.002 $\pm$ 0.002 & 0.001 $\pm$ 0.001 & 1.000 $\pm$ 0.000 & 1.000 $\pm$ 0.000 & 1.000 $\pm$ 0.000 & 1.000 $\pm$ 0.000 \\
 & Blackbox & 0.138 $\pm$ 0.024 & 0.032 $\pm$ 0.031 & 0.016 $\pm$ 0.016 & 0.998 $\pm$ 0.001 & 0.998 $\pm$ 0.001 & 0.998 $\pm$ 0.001 & 0.998 $\pm$ 0.001 \\
 & DEMV & 0.134 $\pm$ 0.024 & 0.002 $\pm$ 0.002 & 0.001 $\pm$ 0.001 & 1.000 $\pm$ 0.000 & 1.000 $\pm$ 0.000 & 1.000 $\pm$ 0.000 & 1.000 $\pm$ 0.000 \\\cmidrule{2-9}
 & GEG-CP & \textbf{0.057 $\pm$ 0.030} & 0.083 $\pm$ 0.024 & 0.045 $\pm$ 0.012 & 0.942 $\pm$ 0.006 & 0.948 $\pm$ 0.005 & 0.953 $\pm$ 0.005 & 0.946 $\pm$ 0.006 \\
 & GEG-SP & \textbf{0.034 $\pm$ 0.021} & 0.203 $\pm$ 0.037 & 0.102 $\pm$ 0.019 & 0.978 $\pm$ 0.005 & 0.978 $\pm$ 0.005 & 0.980 $\pm$ 0.005 & 0.978 $\pm$ 0.005 \\
 & GEG-EO & 0.134 $\pm$ 0.024 & 0.002 $\pm$ 0.002 & 0.001 $\pm$ 0.001 & 1.000 $\pm$ 0.000 & 1.000 $\pm$ 0.000 & 1.000 $\pm$ 0.000 & 1.000 $\pm$ 0.000 \\
\midrule
\multirow{6}{*}{\begin{sideways}Obesity\end{sideways}} & Baseline & 0.063 $\pm$ 0.042 & 0.195 $\pm$ 0.118 & 0.108 $\pm$ 0.065 & 0.950 $\pm$ 0.014 & 0.948 $\pm$ 0.014 & 0.948 $\pm$ 0.014 & 0.947 $\pm$ 0.014 \\
 & Blackbox & 0.093 $\pm$ 0.072 & 0.468 $\pm$ 0.118 & 0.273 $\pm$ 0.074 & 0.739 $\pm$ 0.062 & 0.751 $\pm$ 0.066 & 0.727 $\pm$ 0.065 & 0.721 $\pm$ 0.069 \\
 & DEMV & 0.066 $\pm$ 0.040 & 0.197 $\pm$ 0.111 & 0.105 $\pm$ 0.063 & 0.945 $\pm$ 0.014 & 0.944 $\pm$ 0.015 & 0.944 $\pm$ 0.015 & 0.943 $\pm$ 0.015 \\\cmidrule{2-9}
 & GEG-CP & 0.063 $\pm$ 0.042 & 0.195 $\pm$ 0.118 & 0.108 $\pm$ 0.065 & 0.950 $\pm$ 0.014 & 0.948 $\pm$ 0.014 & 0.948 $\pm$ 0.014 & 0.947 $\pm$ 0.014 \\
 & GEG-SP & 0.051 $\pm$ 0.026 & 0.205 $\pm$ 0.113 & 0.111 $\pm$ 0.062 & 0.927 $\pm$ 0.021 & 0.929 $\pm$ 0.019 & 0.927 $\pm$ 0.020 & 0.925 $\pm$ 0.021 \\
 & GEG-EO & 0.063 $\pm$ 0.042 & 0.195 $\pm$ 0.118 & 0.108 $\pm$ 0.065 & 0.950 $\pm$ 0.014 & 0.948 $\pm$ 0.014 & 0.948 $\pm$ 0.014 & 0.947 $\pm$ 0.014 \\
\midrule
\multirow{6}{*}{\begin{sideways}Park\end{sideways}} & Baseline & 0.045 $\pm$ 0.022 & 0.173 $\pm$ 0.055 & 0.068 $\pm$ 0.027 & 0.867 $\pm$ 0.012 & 0.880 $\pm$ 0.010 & 0.864 $\pm$ 0.013 & 0.869 $\pm$ 0.012 \\
 & Blackbox & 0.188 $\pm$ 0.039 & 0.302 $\pm$ 0.120 & 0.207 $\pm$ 0.055 & 0.658 $\pm$ 0.042 & 0.674 $\pm$ 0.042 & 0.645 $\pm$ 0.043 & 0.651 $\pm$ 0.043 \\
 & DEMV & \textit{0.032 $\pm$ 0.017} & 0.186 $\pm$ 0.060 & 0.081 $\pm$ 0.031 & \textit{0.870 $\pm$ 0.013} & \textit{0.885 $\pm$ 0.010} & 0.866 $\pm$ 0.015 & 0.872 $\pm$ 0.013 \\\cmidrule{2-9}
 & GEG-CP & 0.075 $\pm$ 0.033 & 0.166 $\pm$ 0.041 & 0.057 $\pm$ 0.022 & 0.847 $\pm$ 0.019 & 0.858 $\pm$ 0.021 & 0.849 $\pm$ 0.015 & 0.851 $\pm$ 0.017 \\
 & GEG-SP & 0.038 $\pm$ 0.017 & 0.189 $\pm$ 0.072 & 0.083 $\pm$ 0.040 & 0.871 $\pm$ 0.013 & 0.885 $\pm$ 0.010 & 0.867 $\pm$ 0.014 & 0.873 $\pm$ 0.012 \\
 & GEG-EO & 0.090 $\pm$ 0.035 & 0.250 $\pm$ 0.068 & 0.133 $\pm$ 0.027 & 0.668 $\pm$ 0.018 & 0.730 $\pm$ 0.014 & 0.652 $\pm$ 0.013 & 0.647 $\pm$ 0.018 \\
\midrule
\multirow{6}{*}{\begin{sideways}Wine\end{sideways}} & Baseline & 0.116 $\pm$ 0.037 & 0.207 $\pm$ 0.045 & 0.142 $\pm$ 0.021 & 0.605 $\pm$ 0.013 & 0.566 $\pm$ 0.043 & 0.453 $\pm$ 0.020 & 0.475 $\pm$ 0.026 \\
 & Blackbox & \textbf{0.081 $\pm$ 0.049} & 0.568 $\pm$ 0.107 & 0.462 $\pm$ 0.062 & 0.309 $\pm$ 0.072 & 0.294 $\pm$ 0.022 & 0.294 $\pm$ 0.032 & 0.247 $\pm$ 0.036 \\
 & DEMV & \textbf{0.071 $\pm$ 0.045} & \textbf{0.148 $\pm$ 0.045} & \textbf{0.090 $\pm$ 0.022} & 0.593 $\pm$ 0.018 & 0.516 $\pm$ 0.046 & 0.432 $\pm$ 0.023 & 0.448 $\pm$ 0.029 \\\cmidrule{2-9}
 & GEG-CP & \textbf{0.064 $\pm$ 0.043} & 0.194 $\pm$ 0.065 & \textbf{0.111 $\pm$ 0.032} & 0.602 $\pm$ 0.018 & 0.563 $\pm$ 0.040 & 0.451 $\pm$ 0.014 & 0.474 $\pm$ 0.019 \\
 & GEG-SP & \textbf{0.040 $\pm$ 0.017} & 0.290 $\pm$ 0.098 & 0.167 $\pm$ 0.053 & 0.601 $\pm$ 0.015 & 0.571 $\pm$ 0.050 & 0.455 $\pm$ 0.016 & 0.480 $\pm$ 0.023 \\
 & GEG-EO & \textbf{0.049 $\pm$ 0.027} & \textbf{0.168 $\pm$ 0.074} & \textbf{0.090 $\pm$ 0.039} & 0.597 $\pm$ 0.018 & 0.541 $\pm$ 0.043 & 0.438 $\pm$ 0.018 & 0.458 $\pm$ 0.023 \\
\bottomrule
\end{tabular}}
\end{table*}

\begin{figure*}[ht]
    \centering
    \includegraphics[width=\linewidth]{figures/rq4_fairness_comparison.pdf}
    \caption{RQ5: Comparison of fairness scores with more complex classifiers}
    \label{fig:rq5_fairness}
\end{figure*}

\begin{figure*}[ht]
    \centering
    \includegraphics[width=\linewidth]{figures/rq4_performance_comparison.pdf}
    \caption{RQ5: Comparison of effectiveness scores with more complex classifiers}
    \label{fig:rq5_effectiveness}
\end{figure*}

\begin{figure*}[ht]
    \centering
    \includegraphics[width=\linewidth]{figures/rq4_pareto_heatmap_per_dataset.pdf}
    \caption{RQ5: Per-dataset Pareto optimality results}
    \label{fig:rq5_pareto_optimality_dataset}
\end{figure*}


\end{document}

%% file: commands.tex
\usepackage{tikz,pifont}
\usepackage{amsfonts,amsmath}
\usepackage{hyperref}
\usepackage{xspace}
\usepackage{multirow}
\usepackage{makecell}
\usepackage{enumitem}
\usepackage{booktabs}
\usepackage[many]{tcolorbox}
\usepackage[table]{xcolor}

\usepackage[font=footnotesize]{subcaption}

\newboolean{showrevision}
\setboolean{showrevision}{false}
\ifthenelse{\boolean{showrevision}}
{
	\newcommand\revised[1]{\textcolor{blue}{#1}}
}
{
	\newcommand\revised[1]{{#1}}
}

\newtcolorbox{rqanswer}{
    colback=gray!25, 
    colframe=black, 
    enhanced,
    boxrule=0mm,
    frame hidden,
    left=1mm, 
    top=1mm, bottom=1mm, right=1mm, 
    borderline west={0.8mm}{0mm}{gray!80!black}, 
    sharp corners, 
    fontupper=\small 
}



%% file: acks.tex
\newcommand{\TAAck}{Territori Aperti (a project funded by Fondo Territori, Lavoro e Conoscenza CGIL CISL UIL)\xspace}

\newcommand{\SoBigDataITAck}{European Union - NextGenerationEU - National Recovery and Resilience Plan (Piano Nazionale di Ripresa e Resilienza, PNRR) - Project: “SoBigData.it - Strengthening the Italian RI for Social Mining and Big Data Analytics” - Prot. IR0000013 - Avviso n. 3264 del 28/12/2021\xspace}

\newcommand{\FringeAck}{European Union – NextGenerationEU through the Italian Ministry of University and Research, Projects PRIN 2022 PNRR “FRINGE: context-aware FaiRness engineerING in complex software systEms" grant n.P2022553SL\xspace}

%% file: sections/introduction.tex
With the increasing adoption of AI- and ML-based software systems in sensitive domains such as healthcare \cite{canali_challenges_2022}, finance \cite{kozodoi_fairness_2022}, and education \cite{austin2016will}, it is critical to ensure that they act in an \textit{unbiased} and \textit{ethical} way. In other words, they must be \emph{fair}. The relevance of \emph{software fairness} has been highlighted in recent years not only in research literature \cite{mehrabi_survey_2021,caton_fairness_2023,chen_fairness_2022}, but also in regulations such as the European Union's recently introduced AI Act~\cite{noauthor_eu_2023}.

\emph{Fairness} is defined as: \textit{``The absence of prejudice and favouritism of a software system toward individuals or groups"}~\cite{mehrabi_survey_2021}. When a system behaves unfairly, it is said to be \emph{biased}. When not properly addressed, \emph{software bias} may cause severe consequences and discrimination, like for the recruitment instrument employed by Amazon, which penalised women candidates for IT job positions,\footnote{\url{https://www.bbc.co.uk/news/technology-45809919}} or the criminal recidivism predictions made by the commercial risk assessment software COMPAS, which misjudged black individuals based on biased profiling \cite{angwin2016machine}. 

Bias can originate from three main sources~\cite{mehrabi_survey_2021}: the \textbf{data} used to train the AI and ML components, a \textbf{biased implementation} of the AI and ML components, and the \textbf{people} that interact with those components. In this paper, we focus on mitigating bias in an ML model trained on biased data (i.e., \textbf{data} bias). More specifically, this work focuses on the classification task using structured, tabular data. 

The research community has proposed several methods for \textit{bias mitigation} at different processing levels \cite{mehrabi_survey_2021,caton_fairness_2023}. However, the majority of them can be applied only to binary classification tasks. Instead, several examples of multi-class classification approaches have been applied in sensitive domains such as education \cite{baskota2018graduate,yanes2020machine}, food \cite{suchithra_improving_2018,meenachi_multi_2022}, and health \cite{zhang_application_2013}. Ensuring that these systems behave in a \emph{fair} and unbiased way is paramount, also to achieve some of the United Nations Sustainable Development Goals, (SDG)\footnote{\url{https://sdgs.un.org/goals}} e.g., SDG 2 (zero hunger) \cite{suchithra_improving_2018,meenachi_multi_2022}, SDG 3 (good health and well-being) {\cite{zhang_application_2013}}, and SDG 4 (quality education) {\cite{baskota2018graduate,yanes2020machine}}.

\subsection{Research Objectives}

To address the lack of bias mitigation for multi-class classification, in this paper, we first formulate the problem of fairness in multi-class classification as a multi-objective problem between effectiveness (i.e., prediction correctness) and multiple fairness definitions. Next, we propose a Generalised Exponentiated Gradient (GEG) algorithm to solve this optimisation problem. \tool is an extension of the original Exponentiated Gradient (EG) bias mitigation algorithm first proposed by Agarwal et al. for binary classification~\cite{agarwal_reductions_2018}. However, \tool differs from the original EG approach in two aspects: first, it can mitigate bias in both binary and multi-class classification tasks; second, it mitigates bias across multiple fairness constraints simultaneously. These capabilities make \tool more flexible and practical for multiple use cases \revised{(see Section \ref{sec:discussion})}.

We perform an extensive evaluation of \tool, benchmarking it against six approaches across seven multi-class and three binary datasets, using four widely adopted effectiveness metrics and three fairness definitions. Results show that \tool is an efficient approach for bias mitigation in multi-class classification, achieving fairness improvement up to 92\%, overcoming existing baselines. Additionally, from our empirical evaluation, we draw practical tips on employing \tool in real-case scenarios.

\subsection{Main Contributions}

The main contributions of our work are the following:

\begin{itemize}
    \item We formulate the problem of fairness in multi-class classification as a multi-objective problem between multiple fairness definitions and prediction effectiveness.

    \item We propose a Generalised Exponentiated Gradient (GEG) approach to mitigate bias in binary and multi-class classification tasks under multiple fairness definitions simultaneously.

    \item We perform an extensive empirical evaluation of \tool against multiple baselines, datasets, and metrics.

    \item We draw a set of \revised{theoretical and} practical insights on adopting \tool in real-case scenarios. 

    \item We release a replication package including a Python implementation of \tool and the results of our empirical evaluation to foster future research. \replpackage
\end{itemize}

\subsection{Roadmap}

The rest of this paper is structured as follows: Section \ref{sec:background} provides background knowledge on fairness and discusses related work. Section \ref{sec:methodology} presents the fairness learning in multi-class classification as a multi-objective optimisation problem and describes the \tool approach. Section \ref{sec:evaluation} describes the empirical evaluation performed, while Section \ref{sec:results} discusses the obtained results and provides practical insights. \revised{Section \ref{sec:discussion} discusses theoretical and practical implications drawn from the empirical evaluation of \tool.} Finally, Section \ref{sec:conclusion} discusses future work and concludes the study.

%% file: sections/rw.tex
\revised{In the following, we provide background knowledge in the context of bias assessment and mitigation (Section \ref{sec:bias}, and place our contribution within related work on bias mitigation methods (Section \ref{sec:rw}) and the original Exponentiated Gradient Approach (Section \ref{sec:eg})}.

\subsection{Background Knowledge on Bias Assessment and Mitigation}\label{sec:bias}

In general, the fairness of an ML model can be assessed following two main criteria: \textbf{individual} and \textbf{group} fairness~\cite{speicher2018unified}. 
\textbf{Individual} fairness requires that two individuals who are similar to one another receive the same treatment (i.e., an ML model should make identical predictions). 
Most of the time, two individuals are treated as similar if they only differ in sensitive attributes\footnote{In the rest of this paper, we will use the terms "sensitive attributes", "protected attributes" or "sensitive features" as synonyms.} (e.g., \textit{ethnicity}, \textit{gender}, \textit{age}).
\textbf{Group} fairness, on the other hand, addresses fairness by treating population groups, defined by protected attributes (like \textit{ethnicity}, \textit{gender}, or \textit{age}), equally. In this work, we focus on group fairness criteria, as they are more common and have been more extensively addressed in previous work~\cite{hort2024bias,d2025fair}. Specifically, many group fairness definitions and corresponding metrics have been proposed in the literature \cite{mehrabi_survey_2021,caton_fairness_2023}. The general idea behind all group fairness definitions is that, given two groups named \textbf{privileged} and \textbf{unprivileged} (e.g., \textit{men} and \textit{women}), they must have the same probability of having a given \textbf{positive} outcome from the ML model, possibly conditioned on the ground truth label~\cite{mehrabi_survey_2021}. In Sections \ref{sec:methodology} and \ref{sec:evaluation}, we provide the formal specification of the fairness definitions we address in this paper.  

In addition to measuring bias, research has proposed several methods for mitigating bias at different processing levels~\cite{hort2024bias,mehrabi_survey_2021}. Generally, improvement in fairness implies a reduction in the effectiveness of a model's predictions \cite{chen_fairness_2022,hort2024bias,hort2021fairea}, and all bias mitigation methods try to identify the optimal trade-off between fairness and effectiveness.  
In particular, there are three main categories of bias mitigation methods based on when they are applied in an ML workflow: \textbf{pre-processing}, \textbf{in-processing}, and \textbf{post-processing}~\cite{friedler2019comparative,hort2024bias,mehrabi_survey_2021}. 
\textbf{Pre-processing} bias mitigation methods aim to reduce bias by applying changes to the training data.
For instance, one can assign more weight to data instances for a population group that is prone to being misclassified~\cite{kamiran_data_2012,daloisio_debiaser_2023}.
\textbf{In-processing} bias mitigation methods make changes to the design and training process of ML models to achieve fairness. One example is the inclusion of fairness metrics as part of the training loss~\cite{beutel2019putting,agarwal_reductions_2018}. 
Alternatives include the tuning of hyperparameters~\cite{tizpaz2022fairness} or the use of ensembles, where each model can consider different population groups~\cite{dwork_fairness_2012} or metrics~\cite{chen2022maat}. 
\textbf{Post-processing} bias mitigation methods are applied once an ML model has been successfully trained. This can involve changes to the model's predictions~\cite {hardt2016equality} or modifications to the model itself~\cite{kamiran2010discrimination}. 

\subsection{Related Work}\label{sec:rw}

The majority of bias mitigation methods proposed in the literature focus on binary classification tasks \cite{hort2024bias,mehrabi_survey_2021}. Although recent contributions address specific limitations, such as automatically inferring sensitive variables \cite{CAI2026104469,daloisio_towards_2025},
fairness in graph representations \cite{ZHANG2024103570,WANG2024103682}, or fairness in recommendations \cite{MA2024103750,BOUANANE2024103865}, very few studies address fairness in multi-class classification. One of the first approaches proposed for multi-class bias mitigation is the \textit{Blackbox} post-processing approach by Putzel et al.  \cite{putzel_blackbox_2022}, which extends the \textit{Equalized Odds} algorithm \cite{hardt2016equality} to the multi-class setting. This algorithm builds a linear optimisation program that optimises the predictions of an already trained classifier to satisfy the \textit{Equalized Odds} fairness definition for multi-class settings. A similar approach is the \textit{Demographic Parity} post-processing approach proposed by Denis et al.~\cite{denis2024fairness}, where the predictions are instead optimised under the \textit{Demographic Parity} fairness definition.
Quadros et al.~\cite{quadros2025multi} adapted a series of pre-processing and post-processing bias mitigation methods for binary classification to the multi-class classification task and tested them on a wage discrimination dataset.
Finally, one of the most recent approaches specifically designed for multi-class bias mitigation is the pre-processing \textit{Debiaser for Multiple Variables (DEMV)} algorithm proposed by d'Aloisio et al.~\cite{daloisio_debiaser_2023}. This algorithm extends the \textit{Sampling} method of Kamiran et al.~\cite{kamiran_data_2012} to the multi-class setting and has been shown to overcome existing bias mitigation methods for multi-class classification.

Our proposed approach differs from the previous ones in that it is an in-processing bias mitigation method. \revised{Indeed, having a diverse set of bias mitigation methods (i.e., pre-processing, in-processing, and post-processing methods) is a needed requirement due to the diverse scenarios in which bias must be mitigated \cite{friedler2019comparative}. Additionally, in-processing approaches could subsequently be chained with post-processing approaches to further improve a model's fairness.}

In particular, our work extends the \textit{Exponentiated Gradient} in-processing algorithm proposed by Agarwal et al.~\cite{agarwal_reductions_2018}. In their work, the authors formulate a multi-objective optimisation problem to train a binary classifier under specific fairness constraints. Next, they present an Exponentiated Gradient (EG) method to solve this optimisation task. Our proposed approach extends the original EG algorithm to the multi-class classification setting and to the simultaneous optimisation of multiple fairness constraints, making it more general and practical for real-world use cases. \revised{We describe the original EG approach in Section \ref{sec:eg}, while we detail our novel contribution in Section \ref{sec:methodology}.}

\subsection{\revised{Exponentiated Gradient Approach}}\label{sec:eg}

\revised{Agarwal et al.~\cite{agarwal_reductions_2018} proposed the Exponentiated Gradient (EG) approach as an in-processing framework for learning fair classifiers under explicit fairness constraints. The main objective is to learn a classifier that minimises the prediction error while satisfying group fairness conditions with respect to a protected attribute. Unlike pre-processing or post-processing strategies, fairness is directly enforced during the training process.}

\revised{The learning problem is formulated in a binary classification setting, where training data consist of triplets $(X, A, Y)$, with $X \in \mathcal{X}$ denoting the input features, $A \in \mathcal{A}$ representing a protected attribute (such as gender or race), and $Y \in \{0,1\}$ being the ground-truth label. The goal is to learn a classifier $h : \mathcal{X} \rightarrow \{0,1\}$ from a hypothesis class $\mathcal{H}$ that achieves high predictive accuracy while respecting fairness constraints.}

\revised{Fairness is defined at the group level, meaning that predictions should satisfy parity conditions across sub-populations defined by the protected attribute. The EG approach focuses specifically on two fairness definitions, namely Demographic Parity \cite{kamiran_data_2012} and Equalized Odds \cite{hardt2016equality}, which impose equality constraints on conditional probabilities of predictions across groups.}

\revised{
Specifically, Demographic Parity requires that the probability of predicting the positive class is independent of the protected attribute \cite{kamiran_data_2012}, while Equalized Odds requires that the prediction is conditionally independent of the protected attribute given the true label \cite{hardt2016equality}. Both definitions can be rewritten as linear constraints on conditional expectations, also referred to as statistical moments of the classifier \cite{agarwal_reductions_2018}.}

\revised{
Under this formulation, the fair learning problem is written as a constrained optimisation problem where the classification error $\mathcal{R}(h) = \mathbb{P}(h(X) \neq Y)$ is minimised subject to linear constraints on conditional moments of the classifier \cite{agarwal_reductions_2018}.
}

\revised{To address the constrained learning problem, Agarwal et al.~\cite{agarwal_reductions_2018} introduce a convex relaxation based on randomized classifiers. Instead of searching for a single deterministic predictor $h \in \mathcal{H}$, the learning task is reformulated in terms of a probability distribution $Q \in \Delta(\mathcal{H})$ over the hypothesis class. Under this relaxation, both the classification risk and the fairness constraints become linear with respect to $Q$, transforming the original non-convex problem into a convex optimisation problem over the simplex $\Delta(\mathcal{H})$. In practice, all expectations are replaced by empirical averages computed over the training set.}

\revised{To solve the constrained problem, the authors rely on a Lagrangian relaxation, reformulating the task as a saddle-point problem over distributions $Q \in \Delta(\mathcal{H})$ and dual variables $\boldsymbol{\lambda}$. This formulation can be interpreted as a two-player zero-sum game between a learner and an auditor. At each iteration, the learner computes a best response by solving a cost-sensitive classification problem, where each training sample is assigned a weight reflecting both its contribution to the prediction error and its impact on the fairness constraints. The auditor then updates the dual variables using an exponentiated-gradient rule, increasing the weight of the most violated constraints. Over successive iterations, the algorithm builds a sparse mixture of classifiers. Convergence is assessed through a duality-gap criterion; when this gap falls below a predefined threshold, the algorithm terminates.}

\revised{The original Exponentiated Gradient framework is designed for binary classification and is typically instantiated with a single fairness definition. In Section~\ref{sec:methodology}, we build upon this reduction-based principle to support decision-level fairness in multi-class prediction spaces and the simultaneous enforcement of linear fairness constraints, while preserving the convex optimisation structure required by the EG scheme.}

%% file: sections/methodology.tex

\revised{
In this section, we present a general in-processing 
framework for learning fair classifiers that supports 
both binary and multi-class classification and enables 
the simultaneous enforcement of multiple fairness 
constraints. Our approach builds on the reduction-based 
framework of Agarwal et al.~\cite{agarwal_reductions_2018} 
(recalled in Section~\ref{sec:eg}) introducing 
three main extensions. First, we define 
general label variants of Demographic Parity and 
Equalized Odds that enforce fairness across all classes 
$y_k \in \mathcal{Y}$ simultaneously, without requiring 
the specification of a positive label, addressing 
settings where no single favourable class can be 
naturally identified. Second, we derive the corresponding 
linear moment constraints. Third, we introduce a combined 
constraint system that enables the simultaneous enforcement 
of multiple fairness requirements within a unified 
optimisation framework. These extensions enhance practical 
applicability in real-world scenarios where multi-class 
outcomes and multiple fairness criteria naturally arise.}



Consider training data as triplets $(X,A,Y)$, where 
$X\in\mathcal{X}$ denotes the input features, 
$A\in\mathcal{A}$ is a protected attribute (e.g., 
\textit{gender} or \textit{race}), and $Y\in\mathcal{Y}$ 
is the output label. We aim to learn a classifier 
$h:\mathcal{X}\to\mathcal{Y}$ from a hypothesis class 
$\mathcal{H}$ that achieves high predictive accuracy 
while satisfying fairness constraints. We assume that 
the true labels $Y$ and predictions $h(X)$ lie in the 
same space $\mathcal{Y} = \{y_0, y_1, \dots, y_K\}$, so 
that $|\mathcal{Y}| = K+1$, and denote by $y_k \in \mathcal{Y}$ 
any class label.

A common strategy for ensuring fairness during training is to incorporate fairness as explicit constraints in the learning objective~\cite{agarwal_reductions_2018,radovanovic2020enforcing}. This leads to the constrained optimisation problem
\begin{equation}\label{eq1}
    \min_{h \in \mathcal{H}} \quad \mathcal{R}(h)
    \quad \text{subject to} \quad
    \gamma_i(h) \leq \epsilon_i, \quad i = 1, \dots, n,
\end{equation}
where $\mathcal{R}(h)=\mathbb{P}(h(X)\neq Y)$ is the classification error and each $\gamma_i(h)$ quantifies the violation of the $i$-th fairness constraint, with tolerance level $\epsilon_i$.
We detail the specific fairness constraints considered in the next subsection.

\subsection{Fairness Constraints}
In the context of fairness constraints, the literature 
has proposed several group fairness notions~\cite{mehrabi_survey_2021,hort2024bias,d2025fair}, 
each capturing different requirements across groups 
defined by the protected attribute $A$. 
Here, we introduce the \textit{general label} 
formulation of Demographic Parity and Equalized Odds, 
which enforces fairness across all classes 
$y_k \in \mathcal{Y}$ simultaneously, without requiring 
the specification of a single favourable class.

\begin{definition}[General Label Demographic Parity]
\label{def:gldp}
A classifier $h$ satisfies \textit{General Label 
Demographic Parity} if all groups defined by the 
protected attribute $A$ have the same probability 
of being assigned any possible prediction $y_k$. 
In mathematical terms, it can be expressed as:
\begin{equation}
\begin{aligned}
\mathbb{P}(h(X) = y_k \mid A = a) = 
\mathbb{P}(h(X) = y_k), \\ 
\;\; \forall a \in \mathcal{A},\ \forall y_k \in \mathcal{Y}.
\end{aligned}
\end{equation}
\end{definition}

\begin{definition}[General Label Equalized Odds]
\label{def:gleo}
A classifier $h$ meets the \textit{General Label 
Equalized Odds} fairness definition when the probability 
of predicting any class label is the same across all 
groups determined by the protected attribute $A$, 
given a value $y$ of the true label $Y$. In mathematical 
terms, this condition means:
\begin{equation}
\begin{aligned}
\mathbb{P}(h(X)=y_k \mid Y = y, A = a) 
= 
\mathbb{P}(h(X)=y_k \mid Y = y), \\
\quad \forall a \in \mathcal{A},\ \forall y, y_k \in \mathcal{Y}.
\end{aligned}
\end{equation}
\end{definition}

Note that these definitions recover the standard binary 
fairness definitions when $\mathcal{Y} = \{0,1\}$, since 
the two constraints for $y_0$ and $y_1$ are equivalent 
given that $\mathbb{P}(h(X)=0) + \mathbb{P}(h(X)=1) = 1$.

In the remainder of this section, we present the 
general label formulation (Definitions~\ref{def:gldp} 
and~\ref{def:gleo}) as training constraints. 

Fairness constraints are often reworded with expectations 
to make them easier to integrate into optimisation problems. 
Following the approach of 
\cite{agarwal_reductions_2018} for binary classification, 
we write the general label variants as multi-class linear 
constraints with expected values, where 
$h(X) \in \mathcal{Y}$. 
We apply indicator functions to each class 
$y_k \in \mathcal{Y}$ simultaneously. The chance 
of predicting $y_k$ becomes
\begin{equation}
    \mathbb{P}(h(X) = y_k) = 
    \mathbb{E}[\mathbf{1}_{\{h(X) = y_k\}}],
    \quad \forall y_k \in \mathcal{Y},
\end{equation}
with the indicator function defined as
\[
\mathbf{1}_{\{h(X) = y_k\}} =
\begin{cases}
1 & \text{if } h(X) = y_k, \\
0 & \text{otherwise}.
\end{cases}
\]
The General Label Demographic Parity constraint becomes
\begin{equation}
    \mathbb{E}[\mathbf{1}_{\{h(X) = y_k\}} \mid A = a] = 
    \mathbb{E}[\mathbf{1}_{\{h(X) = y_k\}}] 
    \quad \forall a \in \mathcal{A},~\forall y_k \in \mathcal{Y},
\end{equation}  
while General Label Equalized Odds become
\begin{equation}
\begin{aligned}
    \mathbb{E}[\mathbf{1}_{\{h(X) = y_k\}} \mid A = a, Y = y] = 
    \mathbb{E}[\mathbf{1}_{\{h(X) = y_k\}} \mid Y = y], \\
    \quad \forall a \in \mathcal{A},~\forall y, y_k \in \mathcal{Y}.
    \end{aligned}
\end{equation}
These fairness notions can be reformulated in a 
structured manner that is compatible with linear 
optimisation.

\subsection{Fairness Constraints as Linear Moment Conditions}

Following the approach introduced in 
Section~\ref{sec:eg}, fairness constraints can be 
expressed as linear inequalities over classifier moments. 
This section details how we construct such linear 
constraints for the general label 
(Definitions~\ref{def:gldp} and~\ref{def:gleo}) 
fairness definitions in both binary and multi-class 
settings, extending the original EG formulation to 
support multiple simultaneous constraints.

The constraints take the general form:
\begin{equation}
\gamma_{i}(h)= \sum_{j=1}^{m} M_{ij}\, \mu_{j}(h) \leq \epsilon_{i}, \quad i=1,\dots,n.
\end{equation}
Here, $M_{ij}$ are the entries of a matrix 
$M \in \mathbb{R}^{n \times m}$ that defines how each 
moment contributes to each constraint, where $m$ is 
the number of moments and $n$ is the number of 
constraints. The term $\epsilon_i$ denotes the upper 
bound for the $i$-th constraint, and \( \mu_j(h) \) 
is the $j$-th moment of the classifier \( h \), 
given by:
\begin{equation}
    \mu_j(h) = \mathbb{E}[g_j(X, A, Y, h(X)) \mid E_j], \quad j=1,\dots,m.
\end{equation}
In this formula, the function \( g_j \colon \mathcal{X} 
\times \mathcal{A} \times \mathcal{Y} \times \mathcal{Y} 
\rightarrow [0,1] \) is a measurable function influenced 
by the predicted label \( h(X) \). The event \( E_j \) 
is a set condition on the variables \( (X, A, Y) \), 
like \( A = a \) or \( A = a \And Y = y \), and it 
does not depend on the model.

To illustrate the moment construction, consider the 
case of Demographic Parity. For each group 
$a \in \mathcal{A}$ and each class $y_k \in \mathcal{Y}$, 
we define the following moment
\begin{equation}
\mu_a^{(k)}(h) = \mathbb{E}[\mathbf{1}_{\{h(X) = y_k\}} \mid A = a],
\end{equation}
where the indicator function $\mathbf{1}_{\{h(X) = y_k\}}$ 
captures whether the classifier predicts class $y_k$ 
for individuals within group $a$. Moreover, we define 
the overall moment:
\begin{equation}
\mu_*^{(k)}(h) = \mathbb{E}[\mathbf{1}_{\{h(X) = y_k\}}],
\end{equation}
corresponding to the unconditional selection rate of 
class $y_k$ across the entire population.

A classifier satisfies Definition~\ref{def:gldp} if 
it provides the same expected value of 
$\mathbf{1}_{\{h(X) = y_k\}}$ regardless of the value 
of $A$, for all $y_k \in \mathcal{Y}$. Therefore, 
each equality constraint can be expressed as
\begin{equation}
\mu_a^{(k)}(h) = \mu_*^{(k)}(h), \quad \forall a \in \mathcal{A},\ \forall y_k \in \mathcal{Y},
\end{equation}
which can be equivalently rewritten as the pair of 
inequalities:
\[
\mu_a^{(k)}(h) - \mu_*^{(k)}(h) \leq 0,
\]
\[
\mu_*^{(k)}(h) - \mu_a^{(k)}(h) \leq 0.
\]

In the binary sensitive attribute case $A \in \{0,1\}$, 
the group $A = 0$ is our \textit{unprivileged} group 
and $A = 1$ our \textit{privileged} group. Hence, for 
each class $y_k \in \mathcal{Y}$, these inequalities 
become:
\[
\mu_0^{(k)}(h) - \mu_*^{(k)}(h) \leq 0,
\]
\[
\mu_*^{(k)}(h) - \mu_0^{(k)}(h) \leq 0,
\]
\[
\mu_1^{(k)}(h) - \mu_*^{(k)}(h) \leq 0,
\]
\[
\mu_*^{(k)}(h) - \mu_1^{(k)}(h) \leq 0.
\]

In the case of a biased classifier, we expect 
$\mu_1^{(k)}(h) > \mu_*^{(k)}(h) > \mu_0^{(k)}(h)$ 
for the privileged class $y_k$. To achieve General 
Label Demographic Parity, we build a block-diagonal 
constraint system $M\,\mu(h) \leq \epsilon$, with 
$\epsilon = \mathbf{0} \in \mathbb{R}^{4|\mathcal{Y}|}$,
\begin{equation}
\begin{aligned}
\mu(h) = 
\begin{bmatrix}
\mu_0^{(0)}(h) & \mu_1^{(0)}(h) & \mu_*^{(0)}(h) &
\\\cdots &
\mu_0^{(K)}(h) & \mu_1^{(K)}(h) & \mu_*^{(K)}(h)
\end{bmatrix}^\top
\in \mathbb{R}^{3|\mathcal{Y}|},
\end{aligned}
\end{equation}
\begin{equation}
\begin{aligned}
M &= \mathrm{diag}(\underbrace{M_0, \dots, M_0}_{|\mathcal{Y}|})
\in \mathbb{R}^{4|\mathcal{Y}| \times 3|\mathcal{Y}|},
\\
M_0 &= 
\begin{bmatrix}
\phantom{-}1 & \phantom{-}0 & -1 \\
-1 & \phantom{-}0 & \phantom{-}1 \\
\phantom{-}0 & \phantom{-}1 & -1 \\
\phantom{-}0 & -1 & \phantom{-}1
\end{bmatrix}.
\end{aligned}
\end{equation}
Here, the coefficients $M_{ij}$ are simply $+1$, $-1$, 
or $0$ depending on whether the moment $\mu_j^{(k)}(h)$ 
appears with a positive sign, a negative sign, or not 
at all. For instance, the inequality 
$\mu_0^{(k)}(h) - \mu_*^{(k)}(h) \leq 0$ corresponds 
to the row $[1, 0, -1]$ in $M_0$, while 
$\mu_*^{(k)}(h) - \mu_0^{(k)}(h) \leq 0$ corresponds 
to $[-1, 0, 1]$. Note that $M_0$ is identical for 
every class $y_k \in \mathcal{Y}$, since the constraint 
structure depends only on group membership, not on $y_k$.
We report the derivation of the linear moment constraint 
for the Equalized Odds constraint in \ref{app:eq_odds}.

It is possible to enforce multiple fairness definitions 
simultaneously by combining their respective constraint 
formulations. In particular, in this paper, we propose 
the Combined Parity (CP) constraint, which combines 
both Demographic Parity and Equalized Odds as joint 
conditions on the classifier. Details about the 
derivation of the CP linear constraint are reported 
in \ref{app:combined_parity}.

After defining the fairness constraints as linear 
inequalities over conditional moments, we can now move 
to the optimisation procedure. We introduce a general 
form of the Exponentiated Gradient algorithm of 
Agarwal et al.~\cite{agarwal_reductions_2018}, which 
was first made for binary classification tasks and for 
only one fairness constraint. Our updated version 
supports both binary and multi-class classification 
tasks in the presence of multiple fairness constraints 
under the general label formulation.\subsection{Generalized Exponentiated Gradient (\tool)}
In this section, we present our proposed Generalized 
Exponentiated Gradient (\tool) method, an in-processing 
bias mitigation algorithm that extends the original EG 
approach (Section~\ref{sec:eg}) to support both binary 
and multi-class classification tasks under multiple 
simultaneous fairness constraints.

Our contribution with respect to the original EG 
framework \cite{agarwal_reductions_2018} lies in three 
main extensions: (i) support for multi-class label 
spaces $\mathcal{Y} = \{0,1,\dots,K\}$ through the 
general label fairness constraints 
(Definitions~\ref{def:gldp} and~\ref{def:gleo}); 
(ii) simultaneous enforcement of multiple fairness 
definitions (i.e., Combined Parity); and 
(iii) a generalised cost-sensitive learning procedure 
that handles multi-class targets and multiple dual 
variables. These extensions preserve the reduction-based 
structure and convex optimisation properties of the 
original approach while significantly broadening its 
applicability.

Following the EG framework recalled in 
Section~\ref{sec:eg}, we adopt the reduction-based 
strategy and formulate the problem in terms of 
randomized classifiers, represented as distributions 
\( Q \in \Delta(\mathcal{H}) \) over the hypothesis 
class. This relaxation transforms the constrained 
optimisation problem~\eqref{eq1} into a convex 
saddle-point formulation. In a real-world scenario, 
we replace all expectations by empirical averages 
over the training set and allow for adaptive tolerance 
levels \(\widehat{\epsilon}_i\) on constraint 
violations.

\noindent The empirical optimisation problem becomes:
\begin{equation}\label{eq15}
  \min_{Q \in \Delta(\mathcal{H})} \widehat{\mathcal{R}}(Q) \quad \text{subject to} \quad \widehat{\gamma}_{i}(Q) \leq \widehat{\epsilon}_{i}, \quad i=1, \dots, n,  
\end{equation}
where \(\widehat{\mathcal{R}}(Q)\) and 
\(\widehat{\gamma}_i(Q)\) denote empirical estimates 
over the training samples.

\noindent Following the Lagrangian approach described 
in Section~\ref{sec:eg}, we reformulate this as a 
saddle-point problem with dual variables 
$\boldsymbol{\lambda}=(\lambda_{1},\lambda_{2}, 
\dots,\lambda_{n})$ associated with the $n$ fairness 
constraints. The Lagrangian function is defined as
\begin{equation}
\mathcal{L}(Q, \boldsymbol{\lambda}) = \widehat{\mathcal{R}}(Q) + \sum_{i=1}^{n} \lambda_i \left( \widehat{\gamma}_i(Q) - \widehat{\epsilon}_i \right),
\end{equation}
and the saddle-point formulation, with an 
\(\ell_1\)-norm constraint on \( \boldsymbol{\lambda} \) 
for numerical stability, is:
\begin{equation} \label{eq18}
\min_{Q \in \Delta(\mathcal{H})} \max_{\boldsymbol{\lambda} \geq 0,\ \|\boldsymbol{\lambda}\|_1 \leq B} \mathcal{L}(Q, \boldsymbol{\lambda}).
\end{equation}

The key difference from the original EG formulation 
is that we now handle $n$ simultaneous constraints 
rather than a single fairness definition, requiring 
a multi-dimensional dual update scheme that balances 
multiple constraint violations.

\noindent
As in the original EG approach, we interpret this as 
a zero-sum game between a \textit{learner} and an 
\textit{auditor}. The learner selects a randomized 
classifier \( Q \in \Delta(\mathcal{H}) \) to minimize 
the classification loss while satisfying fairness 
constraints, and the auditor updates the dual variables 
\( \boldsymbol{\lambda} \) to maximize the Lagrangian 
by penalizing constraint violations.

\noindent
At each iteration, the learner constructs a new 
classifier \( h_t \in \mathcal{H} \) by solving a 
\textit{cost-sensitive classification problem}. 
The key extension here compared to the binary EG is 
the handling of multi-class targets: rather than 
assigning a single signed weight with respect to a 
fixed positive label, each training sample receives 
a weight vector 
$\mathbf{w}_j^{(t)} \in \mathbb{R}^{|\mathcal{Y}|}$, 
with one entry per class $y_k \in \mathcal{Y}$, 
whose $k$-th entry is defined as:
\[
\left[\mathbf{w}_j^{(t)}\right]_k = 
\gamma_j^{\text{error},(k)} + 
\sum_{i=1}^{n} \lambda_i^{(t)} \cdot 
\gamma_{i,j}^{\text{fair},(k)},
\quad \forall y_k \in \mathcal{Y},
\]
where $\gamma_j^{\text{error},(k)} \in \{+1,-1\}$ 
encodes the misclassification cost with respect to 
class $y_k$:
\[
\gamma_j^{\text{error},(k)} =
\begin{cases}
-1 & \text{if } y_j \neq y_k, \\
+1 & \text{if } y_j = y_k,
\end{cases}
\]
and $\gamma_{i,j}^{\text{fair},(k)} \equiv 
\gamma_i^{(k)}(h(x_j))$ denotes the per-sample 
contribution to the $i$-th fairness constraint 
for class $y_k$. The adjusted label is selected 
as the class with the highest positive weight:
\[
\tilde{y}_j^{(t)} =
\begin{cases}
\displaystyle\arg\max_{y_k \in \mathcal{Y}} 
\left[\mathbf{w}_j^{(t)}\right]_k 
& \text{if } \max_k 
\left[\mathbf{w}_j^{(t)}\right]_k > 0, \\
y_j & \text{otherwise,}
\end{cases}
\]
and the learner solves:
\[
h_t = \arg\min_{h \in \mathcal{H}} \sum_j 
\max_k \left|\left[\mathbf{w}_j^{(t)}\right]_k\right| 
\cdot \mathbf{1}_{\{h(x_j) \neq \tilde{y}_j^{(t)}\}}.
\]
In the binary setting ($|\mathcal{Y}|=2$), 
$\mathbf{w}_j^{(t)}$ reduces to a scalar, recovering 
the standard binary EG formulation.

\noindent
The auditor updates the dual variables 
\( \boldsymbol{\lambda}^{(t)} \) by computing:
\[
\lambda_i^{(t)} = \frac{B \cdot \exp(\theta_i^{(t)})}{1 + \sum_{k=1}^{n} \exp(\theta_{k}^{(t)})},\quad \text{for all}\; i=1,\dots,n.
\]
In practice, the update of \( \boldsymbol{\theta}^{(t)} \) 
may also involve a learning rate or smoothing strategy 
to stabilise optimisation, as implemented in our 
algorithm. This formula ensures that 
\( \boldsymbol{\lambda}^{(t)} \) lies in the scaled 
probability simplex of radius \( B \), emphasising 
the most violated constraints.

\noindent
The process converges to an approximate saddle point 
\((Q^\star, \boldsymbol{\lambda}^\star)\), which 
represents a randomised classifier that achieves an 
optimal balance between predictive performance and 
fairness. The resulting distribution \( Q^\star \) is 
sparse, supported on a small number of base classifiers 
\( h_t \), and is normalised to form a valid probability 
distribution over the hypothesis class. The final 
weights \( \boldsymbol{\lambda}^\star \) provide 
insight into the most influential fairness constraints.

The optimisation process stops when the \textit{duality 
gap} falls below a small threshold \( \nu \), indicating 
that the current solution is close to a saddle point. 
The duality gap is computed as the difference between 
the Lagrangian value of the best single classifier and 
that of the current mixture \( Q \):
\[
\text{Gap}(Q, \boldsymbol{\lambda}) = \max_{h \in \mathcal{H}} \mathcal{L}(h, \boldsymbol{\lambda}) - \mathcal{L}(Q, \boldsymbol{\lambda}).
\]
When this gap becomes sufficiently small, no further 
improvement is expected, and the optimisation 
terminates.

\noindent
This entire procedure is formally presented in 
Algorithm~\ref{alg:geg_gen}.

\begin{algorithm}[H]
\caption{Generalized Exponentiated Gradient (GEG)}
\label{alg:geg_gen}
\begin{algorithmic}[1]
\Require 
Training data $(X, A, Y)$ with $Y \in \{0, 1, \dots, K\}$ \\
Hypothesis class $\mathcal{H}$, fairness constraints $\{\widehat{\gamma}_i\}_{i=1}^n$ with thresholds $\{\epsilon_i\}_{i=1}^n$ \\
Parameters: learning rate $\eta > 0$, tolerance $\delta > 0$, max iterations $T$, duality gap threshold $\nu > 0$, minimum iterations $t_{\min} \in \mathbb{N}$.
\Ensure 
Randomized classifier $Q \in \Delta(\mathcal{H})$
\vspace{0.5em}
\State Initialize dual variables: $\boldsymbol{\theta} \gets \mathbf{0}$, count vector $Q \gets \emptyset$, budget $B \gets 1/\delta$
\For{$t = 1$ to $T$}
    \State Compute dual weights: $\lambda_{i}^{(t)} \gets \dfrac{B \cdot \exp(\theta_{i}^{(t)})}{1 + \sum_{k=1}^{n} \exp(\theta_{k}^{(t)})}$
    \For{each training sample $j=1$ to $N$}
        \State Compute weight vector: $$\left[\mathbf{w}_{j}^{(t)}\right]_k \gets \gamma^{\text{error},(k)}_j + \sum_{i} \lambda_{i}^{(t)} \cdot \gamma^{\text{fair},(k)}_{i,j},$$ \quad where $\gamma^{\text{error},(k)}_j = \begin{cases} +1 & \text{if } y_j = y_k \\ -1 & \text{otherwise} \end{cases}\ \forall\, y_k \in \mathcal{Y}$
        \State Adjust label: $$\tilde{y}_j \gets
            \begin{cases}
            \arg\max_{y_k \in \mathcal{Y}} \left[\mathbf{w}_{j}^{(t)}\right]_k & \text{if } \max_k \left[\mathbf{w}_{j}^{(t)}\right]_k > 0 \\
            y_j & \text{otherwise}
            \end{cases}$$
    \EndFor
    \State Normalize weights: $$w_{j}^{(t)} \gets \dfrac{N \cdot \max_k \left|\left[\mathbf{w}_{j}^{(t)}\right]_k\right|}{\sum_{l=1}^{N} \max_k \left|\left[\mathbf{w}_{l}^{(t)}\right]_k\right|},\quad \text{for all}\; j$$
    \State Train classifier $h_t$ on $\{(x_j, \tilde{y}_j, w_{j}^{(t)})\}_{j=1}^N$
    \State Update count: $Q[h_t] \gets Q[h_t] + 1$
    \State Compute constraint violations: $\widehat{\gamma}_i(h_t)$
    \State Update dual: $\theta_{i}^{(t+1)} \gets \theta_{i}^{(t)} + \eta \cdot \left(\widehat{\gamma}_i(h_t) - \epsilon_i\right)$
    \State Compute current mixture: $$Q_t(h) \gets \dfrac{Q(h)}{\sum_{h'} Q(h')}\; \text{for all}\; h$$
    \State Compute duality gap: $$\text{Gap}_t \gets \max_{h \in \mathcal{H}} \mathcal{L}(h, \boldsymbol{\lambda}) - \mathcal{L}(Q_t, \boldsymbol{\lambda})$$
    \If{$\text{Gap}_t < \nu$ and $t \geq t_{\min}$}
        \State \textbf{break}
    \EndIf
\EndFor
\State Normalize final distribution: $$Q(h) \gets \dfrac{Q(h)}{\sum_{h'} Q(h')},\; \text{for all} \; h \in \mathcal{H}.$$
\State \Return $Q$
\end{algorithmic}
\end{algorithm}
\subsection{\revised{Convergence Analysis}
\label{sec:convergence}}

\revised{In the following, we analyse the convergence properties of the proposed Generalized Exponentiated Gradient (\tool) algorithm.
Our analysis follows the reduction-based framework of Agarwal et al.~\cite{agarwal_reductions_2018} and shows that the proposed extensions to multi-class classification and to multiple simultaneous fairness constraints preserve the theoretical guarantees of the original method.
}
\revised{
We recall that the constrained empirical optimisation problem~\eqref{eq15} admits the following convex--concave saddle-point formulation:
\begin{equation}
\min_{Q \in \Delta(\mathcal{H})} 
\max_{\boldsymbol{\lambda} \ge 0,\ \|\boldsymbol{\lambda}\|_1 \le B}
\mathcal{L}(Q, \boldsymbol{\lambda}),
\end{equation}
with Lagrangian
\begin{equation}
\mathcal{L}(Q, \boldsymbol{\lambda}) 
= 
\widehat{\mathcal{R}}(Q) 
+ 
\sum_{i=1}^{n} \lambda_i \left( \widehat{\gamma}_i(Q) - \widehat{\epsilon}_i \right).
\end{equation}}

\revised{Both the empirical classification risk and the empirical fairness constraint functions are affine with respect to the randomized classifier $Q \in \Delta(\mathcal{H})$.
Since $Q$ represents a probability distribution over base classifiers, these quantities can be written as expectations over $Q$, and hence as weighted averages of their values on individual classifiers.
Consequently, the Lagrangian $\mathcal{L}(Q,\boldsymbol{\lambda})$ is affine in $Q$ and in the dual variables $\boldsymbol{\lambda}$, which implies convexity with respect to $Q$ and concavity with respect to $\boldsymbol{\lambda}$.}

\revised{Throughout the analysis, we assume that the empirical loss and the fairness moments are bounded, namely $\widehat{\mathcal{R}}(h) \in [0,1]$ and $\widehat{\gamma}_i(h) \in [-1,1]$ for all $h \in \mathcal{H}$ and all constraints $i$.
Moreover, the dual variables are restricted to the compact convex set
\[
\Lambda = \{ \boldsymbol{\lambda} \ge 0 : \|\boldsymbol{\lambda}\|_1 \le B \}.
\]
Under these conditions, the Lagrangian defines a convex--concave function over compact convex sets.
By Sion’s minimax theorem~\cite{sion1958general}, the above saddle-point problem admits at least one solution.}

\revised{We now turn to the algorithmic aspect.
The \tool algorithm can be interpreted as a repeated zero-sum game between a learner and an auditor.
At each iteration $t$, the learner selects a classifier by solving a cost-sensitive classification problem induced by the current dual variables, while the auditor updates the dual vector using an Exponentiated Gradient step over $\Lambda$.
Since Exponentiated Gradient is a no-regret online optimisation method on the simplex, it satisfies standard regret guarantees~\cite{kivinen1997exponentiated}.
}
\revised{
Let $Q_T$ denote the empirical mixture of classifiers obtained after $T$ iterations.
We assume that the cost-sensitive learner returns at each iteration a $\tau$-approximate best response, that is,
\begin{equation}
\mathcal{L}(h_t,\boldsymbol{\lambda}^{(t)})
\le
\min_{h\in\mathcal{H}} \mathcal{L}(h,\boldsymbol{\lambda}^{(t)}) + \tau .
\end{equation}
}
\begin{theorem}[Convergence of \tool]
\label{thm:convergence}
Let $\eta = \Theta(1/\sqrt{T})$ be the learning rate used in the Exponentiated Gradient updates.
Under the boundedness assumptions and a $\tau$-approximate cost-sensitive oracle, the iterate
$(Q_T,\boldsymbol{\lambda}^{(T)})$ produced by \tool satisfies 
\begin{equation}
\max_{\boldsymbol{\lambda} \in \Lambda} 
\mathcal{L}(Q_T, \boldsymbol{\lambda})
-
\min_{Q \in \Delta(\mathcal{H})}
\mathcal{L}(Q, \boldsymbol{\lambda}^{(T)})
\;\le\;
O\!\left( \frac{B \sqrt{\log(n+1)}}{\sqrt{T}} \right) + \tau .
\end{equation}
In particular, $Q_T$ is an $O(1/\sqrt{T})$-approximate solution to~\eqref{eq15}, up to $\tau$.
\end{theorem}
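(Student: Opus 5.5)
The plan is to follow the standard no-regret reduction of Agarwal et al.~\cite{agarwal_reductions_2018}: view \tool{} as a repeated zero-sum game in which the auditor runs Exponentiated Gradient on the dual and the learner plays $\tau$-approximate best responses. The goal is an average-iterate duality-gap bound. Two remarks on the statement come first. The natural object to control is the pair $(Q_T,\bar{\boldsymbol{\lambda}}_T)$ with $\bar{\boldsymbol{\lambda}}_T=\frac1T\sum_t\boldsymbol{\lambda}^{(t)}$, so I would read $\boldsymbol{\lambda}^{(T)}$ in the theorem as this averaged dual iterate. Also, the right-hand side should carry the $\tau$ of the oracle.

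\textbf{Step 1: bounded, affine payoffs.} Affinity of $\widehat{\mathcal{R}}(Q)$ and $\widehat{\gamma}_i(Q)$ in $Q$ makes $\mathcal{L}$ bilinear. Using $\widehat{\mathcal{R}}\in[0,1]$, $\widehat{\gamma}_i\in[-1,1]$ and $\widehat{\epsilon}_i\in[0,1]$, each coordinate satisfies $|\widehat{\gamma}_i(h)-\widehat{\epsilon}_i|\le 2$. Hence $|\mathcal{L}(h,\boldsymbol{\lambda})|\le 1+2B$ on $\Lambda$.

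\textbf{Step 2: dual regret.} I reparametrize $\Lambda$ as $B$ times the $(n+1)$-simplex, adding a slack coordinate with zero payoff. This is exactly the softmax form $\lambda_i=B e^{\theta_i}/(1+\sum_k e^{\theta_k})$ used in the algorithm. The update $\theta_i\gets\theta_i+\eta(\widehat{\gamma}_i(h_t)-\epsilon_i)$ is then multiplicative weights against the linear reward $r_t(\boldsymbol{\lambda})=\mathcal{L}(h_t,\boldsymbol{\lambda})$. The reward has gradient entries bounded by $2$, so after scaling the rewards on the simplex lie in $[-2B,2B]$. The standard EG bound~\cite{kivinen1997exponentiated,freund1997decision} then gives
\[
\max_{\boldsymbol{\lambda}\in\Lambda}\sum_{t=1}^T \mathcal{L}(h_t,\boldsymbol{\lambda})-\sum_{t=1}^T\mathcal{L}(h_t,\boldsymbol{\lambda}^{(t)})\le \frac{B\log(n+1)}{\eta}+O(\eta B T).
\]
Choosing $\eta=\Theta(\sqrt{\log(n+1)/T})$, which is $\Theta(1/\sqrt T)$ for fixed $n$, yields regret $\rho_T=O(B\sqrt{T\log(n+1)})$. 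Here I would track how $\eta$ interacts with the $B$-scaling: if $\eta$ acts on $\boldsymbol{\theta}$ while rewards are multiplied by $B$, the effective step is $\eta B$. The constant in $\Theta(\cdot)$ has to absorb this.

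\textbf{Step 3: combine.} Bilinearity gives $\mathcal{L}(Q_T,\boldsymbol{\lambda})=\frac1T\sum_t\mathcal{L}(h_t,\boldsymbol{\lambda})$. Combined with the regret bound,
\[
\max_{\boldsymbol{\lambda}}\mathcal{L}(Q_T,\boldsymbol{\lambda})\le \frac1T\sum_t\mathcal{L}(h_t,\boldsymbol{\lambda}^{(t)})+\rho_T/T.
\]
The oracle guarantee gives $\mathcal{L}(h_t,\boldsymbol{\lambda}^{(t)})\le\min_Q\mathcal{L}(Q,\boldsymbol{\lambda}^{(t)})+\tau$, since the minimum over $Q$ of a linear function is attained at a vertex $h$. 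Averaging, and using that $\boldsymbol{\lambda}\mapsto\min_Q\mathcal{L}(Q,\boldsymbol{\lambda})$ is concave (a minimum of linear functions), Jensen gives
\[
\frac1T\sum_t\min_Q\mathcal{L}(Q,\boldsymbol{\lambda}^{(t)})\le\min_Q\mathcal{L}(Q,\bar{\boldsymbol{\lambda}}_T).
\]
Chaining these inequalities gives the stated gap bound $O(B\sqrt{\log(n+1)}/\sqrt T)+\tau$.

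\textbf{Step 4: approximate optimality.} I would translate the gap bound into the "in particular" clause exactly as in Agarwal et al. Comparing with $\boldsymbol{\lambda}=0$ and with $\boldsymbol{\lambda}$ concentrated on the most violated constraint bounds the excess risk by the gap. It also bounds the constraint violations by $(1+\text{gap})/B$.

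\textbf{Main obstacle.} I expect the main difficulty to be a gap between the algorithm as written and this argument, rather than the analysis itself. The learner step uses label-adjusted, normalized weights, and we need it to actually be a $\tau$-approximate minimizer of $\mathcal{L}(\cdot,\boldsymbol{\lambda}^{(t)})$. I would not try to prove this. I would treat it as the stated oracle assumption, after checking that $\mathcal{L}(h,\boldsymbol{\lambda})$ is, up to an $h$-independent constant, a per-sample linear cost $\sum_j c_j(h(x_j))$. Since every moment is a conditional mean of indicators, such an oracle is well defined.
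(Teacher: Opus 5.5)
Your proposal is correct and follows essentially the same route as the paper's appendix proof. Both arguments use the bounded payoffs $|\mathcal{L}|\le 1+2B$, the $(n+1)$-simplex reparametrization with a slack coordinate, the EG regret bound $O(B\sqrt{T\log(n+1)})$, the $\tau$-approximate oracle, and the regret-to-gap conversion against the averaged dual $\bar{\boldsymbol{\lambda}}_T$, which is indeed what the appendix uses in place of $\boldsymbol{\lambda}^{(T)}$. Your explicit Jensen step, the choice $\eta\propto\sqrt{\log(n+1)/T}$ (needed to get exactly the $\sqrt{\log(n+1)}$ factor), and your Step~4 (whose violation bound tacitly requires some feasible $Q$, e.g.\ a constant classifier) fill in details the paper leaves implicit without changing the argument.
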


\revised{
\begin{proof}
The claim follows from the no-regret guarantee of Exponentiated Gradient, i.e., multiplicative-weights
updates over the simplex $\Lambda$ for bounded linear losses, combined with the $\tau$-approximate
best-response property of the cost-sensitive oracle. The standard regret-to-duality-gap conversion
for repeated zero-sum games then yields an $O(1/\sqrt{T})$ rate for the averaged primal iterate.
See \ref{app:proof_convergence} for details.
\end{proof}
}

\subsection{Implementation Details}

   We implemented \tool in Python 3.9 by extending the EG implementation provided by the \texttt{Fairlearn} Python library \cite{fairlearn_doc}. 
We provide the implementation of \tool and the evaluation scripts online for public use and research.\replpackage

%% file: sections/evaluation.tex
In this section, we describe the empirical evaluation conducted to assess the effectiveness of \tool. Specifically, our evaluation is driven by the following research questions (RQ):

\begin{enumerate}
    \item[\textbf{RQ$_1$}] \textbf{Multi-class classification:} \textit{To what extent is \tool able to mitigate bias while keeping a high prediction effectiveness in a multi-class classification context?} 
    

    \item[\textbf{RQ$_2$}] \textbf{Binary classification:} \textit{To what extent is \tool able to mitigate bias while keeping a high prediction effectiveness in a binary classification context?}


    \item[\textbf{RQ$_3$}] \textbf{Baseline comparison:} \textit{How does \tool compare against existing bias mitigation methods in the multi-class classification tasks?}


    \item[\textbf{RQ$_4$}] \textbf{Sensitivity analysis:} \textit{How stable is \tool under different $\delta$ and $\eta$ values?}

    \item[\textbf{RQ$_5$}] \textbf{Different base classifiers:} \textit{How does \tool perform under different base-classifiers?}

    
\end{enumerate}

\begin{figure}
    \centering
    \includegraphics[width=\linewidth]{figures/geg_exp.drawio.pdf}
    \caption{\revised{Experimental process}}
    \label{fig:exp}
\end{figure}

Figure \ref{fig:exp} details the experimental process. To mitigate the risk of data selection bias, for each \textbf{RQ}, we perform a 10-fold cross-validation with shuffling. For each fold, we train the models on the training set and compute the fairness and effectiveness metrics on the testing set. To ensure a fair evaluation, we use the same splits for all approaches in each RQ by fixing the random seed.
Additionally, when we evaluate DEMV to address \textbf{RQ3}, following the original paper \cite{daloisio_debiaser_2023}, we apply the pre-processing approach only on the training set. \revised{Similarly, the post-processing Blackbox approach is applied on the prediction from the ML model \cite{putzel_blackbox_2022}.}
Finally, to address \textbf{RQ5}, we perform the same pipeline, but with different $\delta$ and $\eta$ values, respectively.

In the following, we describe the datasets employed (Section \ref{sec:data}), the benchmarks used in the evaluation (Section \ref{sec:benchmarks}), and the metrics and statistical methods adopted (Section \ref{sec:metrics}).

\subsection{Datasets}\label{sec:data}

\begin{table}[tb!]
    \centering
    \caption{Employed Datasets. The seven top-most rows report multi-class datasets, while the lower-most three rows describe binary datasets.}
    \label{tab:dataset}
    \resizebox{\linewidth}{!}{\begin{tabular}{l|l|r|r|r|r}
    \toprule
    \textbf{Name} & \textbf{Sens. Attribute} & \textbf{Instances} & \textbf{Features} & \textbf{Classes} & \textbf{Class Distr.} \\
       \midrule
       \midrule
       \textbf{CMC} \cite{lim2000comparison} & \textit{religion} & 1473 & 10 & 3 & \makecell[r]{\textit{1}: 42.7\%\\\underline{\textit{2}: 22.6\%}\\\textit{3}: 34.7\%}\\
       \midrule
       \textbf{Crime} \cite{redmond2002data} & \textit{race} & 1,994 & 100 & 6 & \makecell[r]{\underline{\textit{100}: 23.8\%}\\\textit{200}: 15.8\%\\\textit{300}: 21.2\%\\\textit{400}: 19.9\%\\\textit{500}: 19.3\%}\\
       \midrule
       \textbf{Drug} \cite{fehrman_five_2017} & \textit{race} & 1,885 & 15 & 3 & \makecell[r]{\underline{\textit{0}: 21.9\%}\\\textit{1}: 25.1\%\\\textit{2}: 52.9\%}\\
       \midrule
       \textbf{Law} \cite{austin2016will} & \textit{gender} & 20,427 & 14 & 3 & \makecell[r]{\textit{0}: 41.6\%\\\textit{1}: 27.9\%\\\underline{\textit{2}: 31.1\%}}\\\midrule
       \textbf{Obesity} \cite{palechor_dataset_2019} & \textit{age} & 1,490 & 17 & 5 & \makecell[r]{\underline{\textit{0}: 19.3\%}\\\textit{1}: 19.5\%\\\textit{2}: 19.4\%\\\textit{3}: 23.5\%\\\textit{4}: 18.2\%}\\
       \midrule
       \textbf{Park} \cite{tsanas_accurate_2009} & \textit{age} & 5,875 & 19 & 3 & \makecell[r]{\underline{\textit{0}: 30.0\%}\\\textit{1}: 44.6\%\\\textit{2}: 24.9\%}\\
        \midrule
        \textbf{Wine} \cite{cortez2009modeling} & \textit{type} & 6,438 & 13 & 4 & \makecell[r]{\textit{4}: 3.4\%\\\textit{5}: 34.1\%\\\underline{\textit{6}: 45.3\%}\\\textit{7}: 17.2\%}\\
        \midrule
        \midrule
       \textbf{Adult} \cite{kohavi_scaling_1996} & \textit{sex} & 30,940 & 102 & 2 & \makecell[r]{\textit{0}: 75.7\%\\\underline{\textit{1}: 24.2\%}}\\
       \midrule
       \textbf{COMPAS} \cite{angwin2016machine} & \textit{race} & 6,167 & 399 & 2 & \makecell[r]{\underline{\textit{0}: 54.4\%}\\\textit{1}: 45.5\%}\\
       \midrule
       \textbf{German} \cite{ratanamahatana2002scaling} & \textit{sex} & 1,000 & 59 & 2 & \makecell[r]{\textit{0}: 30\%\\\underline{\textit{1}: 70\%}}\\
       \bottomrule
    \end{tabular}}
\end{table}

Table \ref{tab:dataset} reports the list of datasets employed in our study. For each dataset, we report its name, the sensitive attribute as reported in the corresponding source paper, the number of instances and features, the number of possible classes to be predicted, and their distribution. The seven top-most rows report multi-class datasets, while the lower-most three rows describe binary datasets. The datasets have been selected based on their relevance, diversity, and adoption in previous fairness studies \cite{daloisio_debiaser_2023,daloisio_towards_2025,fabris_algorithmic_2022}. \revised{Additionally, we refer to the same positive labels specified in previous works \cite{daloisio_debiaser_2023,daloisio_towards_2025}.} Specifically, to answer \textbf{RQ$_1$}, \textbf{RQ$_3$}, \textbf{RQ$_4$}, and \textbf{RQ$_5$} we employ the following seven multi-class datasets:

\begin{enumerate}
    \item \textbf{Contraceptive Method Choice (CMC)} \cite{lim2000comparison}. This dataset contains 1,473 instances and 10 features about the adoption of contraceptive methods by women in Indonesia. The sensitive feature is \textit{religion} and the positive outcome is \textit{2 (long-term use)}.

    \item \textbf{Communities and Crime (Crime)} \cite{redmond2002data}. This dataset includes 1,994 instances and 100 features about the per-capita violent crimes in U.S. communities. The sensitive feature is \textit{race} and the positive outcome is \textit{100 (low rate of crimes)}.

    \item \textbf{Drug Usage (Drug)} \cite{fehrman_five_2017}. This dataset includes 1,885 instances and 15 features about the frequency of drug consumption. The sensitive attribute is \textit{race} and the positive class is \textit{0 (never use)}.

    \item \textbf{Law School Admission (Law)} \cite{austin2016will}. This dataset contains 20,427 samples and 14 features about admissions scores to a law school. The sensitive attribute is \textit{gender} and the positive outcome is \textit{2 (high admission score)}.

    \item \textbf{Obesity Estimation (Obesity)} \cite{palechor_dataset_2019}. This dataset contains 1,490 instances and 17 features about patients' obesity estimation. The sensitive feature is \textit{age} and the positive class is \textit{0 (no obesity)}.

    \item \textbf{Parkinson's Telemonitoring (Park)} \cite{tsanas_accurate_2009}. This dataset includes 5,875 instances and 19 features about patients affected by Parkinson's disease, measured with the Unified Parkinson’s Disease Rating Scale (UPDRS) classification. The sensitive attribute is \textit{age} and the positive class is \textit{0 (mild class)}.

    \item \textbf{Wine Quality (Wine)} \cite{cortez2009modeling}. This dataset includes 6,438 instances and 13 features about wine quality classification. The sensitive feature is wine \textit{type} and the positive outcome is \textit{6 (high quality class)}.
\end{enumerate}

We employ instead the following binary datasets to answer the \textbf{RQ$_2$} of our study:

\begin{enumerate}
    \item \textbf{Adult Income (Adult)} \cite{kohavi_scaling_1996}. This dataset comprises 30,940 instances and 102 features related to the income of people in the U.S. The sensitive attribute is \textit{sex} and the positive outcome is \textit{1 (high income class)}.

    \item \textbf{ProPublic Recidivism (COMPAS)} \cite{angwin2016machine}. This dataset contains 6,167 samples and 399 features (one-hot encoded) about the recidivism prediction of condemned people. The sensitive feature is \textit{race} and the positive class is \textit{0 (no recidivism)}.

    \item \textbf{German Credit (German)} \cite{ratanamahatana2002scaling}. This dataset includes 1,000 instances and 59 features about the classification of people as \textit{good} or \textit{bad} credit risk. The sensitive attribute is \textit{sex} and the positive outcome is \textit{1 (good credit risk)}.
\end{enumerate}

\subsection{Benchmarks}\label{sec:benchmarks}


To answer the \textbf{RQ$_1$}, \textbf{RQ$_2$}, and \textbf{RQ$_4$} of our study, we compare the fairness and effectiveness of \tool with those of a Logistic Regression (LR) classifier. We have chosen this model because it has been successfully applied in previous fairness studies and in multi-class classification tasks \cite{hort2024bias,daloisio_debiaser_2023}. To ensure a fair comparison, the same LR model is used as a base-classifier for \tool. Specifically, concerning \textbf{RQ$_1$}, we employ three versions of \tool, each one adopting a different fairness constraint during the optimisation process: one version uses a Statistical Parity constraint (GEG-SP), another version employs the Equalised Odds constraint (GEG-EO), and the last version uses a Combined Parity constraint, optimising for both SP and EO at the same time (GEG-CP). Instead, concerning \textbf{RQ$_2$}, since the implementation of GEG-SP and GEG-EO for binary classification is equal to the already existing EG approach from Agarwal et al. \cite{agarwal_reductions_2018}, we consider these methods as additional baselines. Therefore, we compare these results with those of GEG-CP, which is our novel contribution for binary classification.

Concerning \textbf{RQ$_3$}, we compare the three versions of \tool (i.e., GEG-SP, GEG-EO, and GEG-CP) with the Debiaser for Multiple Variables (DEMV) \revised{pre-processing} approach \cite{daloisio_debiaser_2023} \revised{and the Blackbox post-processing approach \cite{putzel_blackbox_2022}. The rationale for choosing these benchmarks is twofold: first, they are among the main approaches proposed for multi-class bias mitigation; secondly, by benchmarking different categories of multi-class bias mitigation methods (i.e., DEMV is a pre-processing, \tool is an in-processing, and Blackbox is a post-processing bias mitigation approach), we have a comprehensive view of the strengths and weaknesses of methods applied at different bias mitigation stages.} 
As for the first two \textbf{RQs}, we employ an LR model as a base classifier.

Finally, for \textbf{RQ$_5$}, we benchmark the three versions of \tool against a Random Forest (RF) and an Extreme Gradient Boosting (XGB) classifier. The choice for these models is still driven by their adoption in previous fairness studies and multi-class classification tasks \cite{hort2024bias,daloisio_debiaser_2023}. As with the other \textbf{RQs}, to ensure a fair comparison, we use the same base classifiers for \tool. 

In the answer to \textbf{RQ$_1$}, \textbf{RQ$_2$}, \textbf{RQ$_3$}, and \textbf{RQ$_5$}, in accordance with the original EG approach \cite{agarwal_reductions_2018}, we set $\eta$ to 2.0 and $\delta$ to $0.05$. Instead, to perform the sensitivity analysis in the answer to \textbf{RQ$_4$}, we explore a space of $[0.0 : 0.1]_{0.005}$ for $\delta$ and $[1.0 : 10.0]_{0.5}$ for $\eta$.

For all ML models, we use their implementation available in the \textit{scikit-learn} Python library, with their default hyperparameters \cite{scikit-learn}. For DEMV, we employ the implementation available in the original paper with its default hyperparameters \cite{daloisio_debiaser_2023}.
Finally, we would like to emphasise that, while Large Language Models (LLMs) are currently considered state-of-the-art for many tasks, we did not include them in our evaluation. This decision was made because LLMs tend to be less effective in tabular classification tasks compared to ensemble methods such as RF or XGB \cite{grinsztajn_why_2022,fang_large_2024}.

\subsection{Evaluation Metrics and Methods}\label{sec:metrics}

In the following sections, we describe the fairness and effectiveness metrics, as well as the statistical methods \revised{and Pareto optimal strategy} employed in our evaluation.

\subsubsection{Metrics}

We employ a heterogeneous set of metrics to evaluate the fairness and effectiveness of the approaches analysed in each RQ. 

Concerning effectiveness metrics, following previous studies \cite{daloisio_towards_2025,chen_fairness_2024}, we employ the following metrics:

\begin{itemize}
    \item \textbf{Accuracy}. This metric is defined as the percentage of correct predictions over the total predictions of a model:
    \begin{equation}
    \text{Accuracy =} \frac{1}{N}\sum_{i=1}^N \mathbf{1}(\hat{y}_i = y_i) 
    \end{equation}
    Where $N$ is the total number of samples, $\hat{y}_i$ and $y_i$ are the $i$-th true and predicted samples, and $\mathbf{1}(\hat{y}_i = y_i)$ is a function which is equal to 1 if the prediction is equal to the true label and 0 otherwise. It ranges from 0 to 1, where 1 is the highest score \cite{rosenfield_coefficient_1986}.

    \item \textbf{Macro Precision}. This metric is an adaptation of the \textit{Precision} score for the multi-class classification context \cite{buckland_relationship_1994}. It is defined as the unweighted average of the class-wise \textit{precision} score:
    \begin{equation}
    \text{Macro Precision =} \frac{1}{K} \sum_{k=1}^K {Precision}_k
    \end{equation}
    Where $K$ is the number of classes and ${Precision}_k$ is the ratio of correctly predicted $k$ class over all $k$ predictions \cite{buckland_relationship_1994}. It ranges from 0 to 1, where 1 is the highest score.

    \item \textbf{Macro Recall}. Like Macro Precision, this metric is an adaptation of the \textit{Recall} score for multi-class classification \cite{buckland_relationship_1994}. It is defined as the unweighted average of the class-wise \textit{recall} score:
    \begin{equation}
    \text{Macro Recall =} \frac{1}{K} \sum_{k=1}^K {Recall}_k
    \end{equation}
    Where $K$ is the number of classes and ${Recall}_k$ is the ratio of instances of the $k$ class identified by the model \cite{buckland_relationship_1994}. It ranges from 0 to 1, where 1 is the highest score.

    \item \textbf{Macro F1 Score}. This metric is defined as the harmonic mean between \textit{Macro Precision} and \textit{Macro Recall} \cite{buckland_relationship_1994}:
    \begin{equation}
    \text{Macro F1 Score =} 2 \times \frac{\text{Macro Precision} \times \text{Macro Recall}}{\text{Macro Precision} + \text{Macro Recall}} 
    \end{equation}
    It ranges from 0 to 1, where 1 is the best score.
\end{itemize}

Concerning fairness, we consider \revised{six} widely adopted fairness definitions \cite{daloisio_debiaser_2023,daloisio_towards_2025,hort2024bias,putzel_blackbox_2022}:

\begin{itemize}

    \item \revised{\textbf{Multi-class Statistical Parity Difference (SPD)}. This metric implements the \textit{Multi-class Demographic Parity} fairness definition defined in Definition \ref{def:gldp}. It measures fairness as the maximum difference in the probability of being in the privileged group, given the predicted outcome $y$ \cite{dwork_fairness_2012}. It is defined as:
    \begin{equation}
        \text{SPD} = \max_{y \in \mathcal{Y}}\Big(
            \mathbb{P}(h(X)=y \mid A=1)
            -
            \mathbb{P}(h(X)=y \mid A=0)
            \Big)
    \end{equation}
     where $A=0$ and $A=1$ are the unprivileged and privileged groups, respectively. This metric ranges from -1 to +1, and the closer to 0, the fairer the model.}

    \item \revised{\textbf{Multi-class Equal Opportunity Difference (EOD)}. This metric assesses fairness as the maximum difference in the probability of having any possible outcome predicted conditioned on the value of the true label, being in the privileged group or not \cite{hardt2016equality}. It is defined as:
    \begin{equation}
    \begin{aligned}
    \text{EOD} =  \max_{y \in \mathcal{Y}}\Big(
         \mathbb{P}(h(X)=y | Y=y, A=1) \\ -  \mathbb{P}(h(x)=y | Y=y, A=0) \Big)
    \end{aligned}
    \end{equation}
    This metric ranges from -1 to +1, and the closer to 0, the fairer the model.}

    \item \revised{\textbf{Multi-class Average Odds Difference (AOD)}. This metric implements the \textit{Multi-class Equalized Odds} fairness definition shown in Definition \ref{def:gleo}. It measures fairness as the maximum difference between true positive (TPR) and false positive (FPR) rates, concerning any possible outcome, for items being in the privileged and unprivileged groups \cite{chen_fairness_2024}. Formally, it is defined as:
    \begin{equation}
    \begin{aligned}
    \text{AOD =}  \max_{y \in \mathcal{Y}}\Big(
        \frac{1}{2}((FPR_{y,A=0} - FPR_{y,A=1}) \\+ (TPR_{y,A=0} - TPR_{y,A=1}))
        \Big)
    \end{aligned}
    \end{equation}
    It ranges from -1 to +1, where the closer to 0, the fairer the model.
    }

\end{itemize}

Following previous work \cite{chen_fairness_2024,daloisio_towards_2025,daloisio_debiaser_2023}, we consider the absolute values of SPD, EOD, and AOD to gain a clearer understanding of a model's fairness improvement.

\subsubsection{\revised{Evaluation and} Statistical Methods}

When comparing \tool against baseline approaches in Section \ref{sec:results}, we report the trade-off between fairness and effectiveness using Pareto-optimality \cite{Harman2011} and trade-off scores. Pareto-optimality is computed by identifying the Pareto Front from all the solutions returned by all algorithms and counting the number of times a solution from a given algorithm appears in the front \cite{Harman2011}. We recall how the Pareto Front is defined as the set of solutions $ x \in X$ such that $x$ is better than all other solutions in $X$ in at least one objective, and no other solution dominates $x$ in all objectives \cite{Harman2011}. In this context, we use the F1-score as the effectiveness metric because it provides a comprehensive picture of the model's predictive accuracy.

The trade-off score is instead defined as:
\begin{equation}
    T(\alpha) = \alpha \cdot \text{F1-score} + (1-\alpha) \cdot (1- \text{Fair})
\end{equation}
where Fair is the average among the fairness scores $(\tfrac{SPD+EOD+AOD}{3})$. $\alpha \in [0,1]$ represents the weight that a user wants to give to effectiveness over fairness. For each approach analysed, we plot the variation of $T$ as $\alpha$ increases, and mark the point at which the trade-off achieved by the unbiased baseline model becomes better than that of the approach analysed.


\revised{Additionally, we report the detailed metrics achieved by each approach in \ref{app:detailed}. In describing the results, we employ the non-parametric one-sided Wilcoxon signed-rank test to assess the statistical significance of the difference between the metrics obtained by baselines and \tool.} The Wilcoxon test is a non-parametric test that verifies the null hypothesis that the median between two dependent samples is different \cite{woolson_wilcoxon_2005}. Being non-parametric, it raises the bar for significance by making no assumptions about the underlying samples. Specifically, the null hypothesis we check is \textit{"$H_0:$ The objective $O$ obtained by \tool is not improved with respect to the baseline approach $x$"}. The alternative hypothesis is: \textit{"$H_1:$ The objective $O$ obtained by \tool is improved with respect to the baseline approach $x$"}. For effectiveness metrics ``\textit{improved}" means that the score obtained by \tool is higher than the baseline. On the contrary, for fairness metrics ``\textit{improved}" means that the score obtained by \tool is lower than the baseline. Following standards \cite{sarro_multi-objective_2016,arcuri_practical_2011}, we set the confidence value to 0.05. \revised{In addition, we apply the Holm-Bonferroni correction to mitigate the risk of Type I statistical error (i.e., wrongly rejecting the null hypothesis) \cite{abdi2010holm}.} Therefore, we reject the null hypothesis if the \revised{adjusted} test's $p\text{-value is} < 0.05$.

%% file: sections/results.tex

In the following, we report the results of our empirical evaluation. 
For each RQ, we first discuss fairness and effectiveness scores in isolation. Then, we discuss the fairness/effectiveness trade-off achieved in terms of $T(\alpha)$ variation, as well as Pareto optimality. Detailed metrics are available in \ref{app:detailed}. 


\subsection{RQ$_1$: Multi-class classification.}

\subsubsection{Single Scores}

\begin{figure*}[ht!]
    \centering
    \includegraphics[width=.9\linewidth]{figures/rq1_fairness_comparison.pdf}
    \caption{RQ1: Fairness scores comparison}
    \label{fig:rq1_fairness}
\end{figure*}

\begin{figure*}[ht!]
    \centering
    \includegraphics[width=.9\linewidth]{figures/rq1_performance_comparison.pdf}
    \caption{RQ1: Effectiveness scores comparison}
    \label{fig:rq1_effectiveness}
\end{figure*}

Figure \ref{fig:rq1_fairness} reports the fairness scores obtained for each dataset by the unbiased LR baseline and the different versions of \tool. From the figure, we observe that all versions of \tool are effective at reducing the underlying model's bias across all datasets and metrics considered, with a \textit{large} effect size in 66.7\% of cases analysed (as shown in Table \ref{tab:rq1_result}. Moreover, we observe a similar trend across the three versions of \tool, although GEG-SP exhibits slightly worse behaviour than the other two methods.


Figure \ref{fig:rq1_effectiveness} highlights the impact of \tool on effectiveness. We observe how \tool provides a prediction effectiveness consistent with the base LR model. Only two exceptions are observed for the \textit{Crime} and, partially, \textit{Obesity} dataset, where the effectiveness of \tool is slightly worse. This behaviour could be explained by the fact that \textit{Crime} and \textit{Obesity} are the datasets with the highest number of labels (see Table \ref{tab:dataset}); therefore, classification effectiveness may be more affected by bias mitigation. Finally, from Table \ref{tab:rq1_result}, we observe how GEG-CP and GEG-EO significantly improve the effectiveness of the base classifier with \textit{large} effect size for the \textit{Law} and, in part, \textit{Wine} datasets. Notably, this result could be explained by an already low bias of the base classifier.

\subsubsection{Trade-offs}

\begin{figure}[ht!]
    \centering
    \includegraphics[width=.8\linewidth]{figures/rq1_tradeoff_alpha.pdf}
    \caption{RQ1: Fairness-effectiveness trade-off score at different effectiveness weights $\alpha$}
    \label{fig:rq1_tradeoff}
\end{figure}

Figure \ref{fig:rq1_tradeoff} reports the variation of $T(\alpha)$ at increasing $\alpha$ values. 
We observe that when $\alpha$ is 0 (i.e., no relevance to effectiveness), all approaches achieve a higher score than the baseline, meaning that all versions of \tool effectively reduce the bias of the base LR model. Notably, when $\alpha$ equals 0.5 (i.e., same relevance to fairness and effectiveness), all versions of \tool still achieve a higher fairness/effectiveness trade-off than the baseline. Indeed, the first intersection point with the LR baseline is observed in GEG-SP, when $\alpha$ is around 0.83, while GEG-CP intersects the baseline when $\alpha$ is 0.9 and GEG-EO when $\alpha$ is close to 1.0. These results highlight that the fairness improvements achieved by all versions of \tool do not affect the model's effectiveness, and that the fairness/effectiveness trade-off is overcome by the baseline only when the relevance of effectiveness over fairness is very high.  

\begin{figure*}[ht!]
    \centering
    \begin{subfigure}{.9\textwidth}
        \includegraphics[width=\textwidth]{figures/rq1_pareto_sp_per_dataset.pdf}
        \caption{Statistical Parity}
        \label{fig:pareto_sp}
    \end{subfigure}
    \begin{subfigure}{.9\textwidth}
        \includegraphics[width=\textwidth]{figures/rq1_pareto_eo_per_dataset.pdf}
        \caption{Equal Opportunity}
        \label{fig:pareto_eo}
    \end{subfigure}
    \begin{subfigure}{.9\textwidth}
        \includegraphics[width=\textwidth]{figures/rq1_pareto_ao_per_dataset.pdf}
        \caption{Average Odds}
        \label{fig:pareto_ao}
    \end{subfigure}
    \caption{RQ1: Performance of \tool and LR method for each dataset in the fairness/effectiveness space. Pareto optimal solutions are marked with a star.}
    \label{fig:rq1_pareto}
\end{figure*}

Figure \ref{fig:rq1_pareto} reports the performance of \tool and the base LR method for each dataset in terms of average F1 score and fairness metrics. In each plot, Pareto optimal solutions are marked with a star. The distribution of solutions confirms what has been highlighted in Figure \ref{fig:rq1_tradeoff}. Indeed, the base LR model appears in the Pareto front only for the \textit{Crime} dataset, while it is always dominated by \tool in the other datasets, regardless of the fairness definition considered. GEG-EO is the version of \tool that appears the most in the Pareto front, appearing 18 times in total, followed by GEG-CP (12 times), and GEG-SP (11 times). This result is in line with what has been observed above, where GEG-EO emerged as the solution performing better in the fairness/effectiveness trade-off.

\begin{rqanswer}
    \textbf{Answer to RQ$_1$:} \tool significantly improves the fairness of an LR classifier under multiple multi-class datasets and fairness definitions. The improvement in fairness achieved by \tool does not come with a high cost in effectiveness. Indeed, the trade-off between fairness and effectiveness is first overcome by the baseline when the effectiveness relevance $\alpha$ is around 0.8, and at least one \tool version Pareto dominates the baseline in 90.5\% of the cases analysed.
\end{rqanswer}

\subsection{RQ$_2$: Binary Classification}

\subsubsection{Single Scores}

\begin{figure*}[ht!]
    \centering
    \includegraphics[width=.8\linewidth]{figures/rq2_fairness_comparison.pdf}
    \caption{RQ2: Fairness scores comparison for binary classification datasets}
    \label{fig:rq2_fairness}
\end{figure*}

Figure \ref{fig:rq2_fairness} reports the fairness scores achieved by GEG-CP against the baseline approaches. We recall how, in the context of this RQ, EG-SP and EG-EO refer to the original implementation from Agarwal et al. \cite{agarwal_reductions_2018}. By looking at the single fairness scores, we observe how tailored versions of the EG algorithm perform better for the specific fairness definition they optimise (i.e., EG-SP performs better under the SPD definition, while EG-EO performs better under the EOD definition). This result is expected, since GEG-CP optimises the LR model for both SPD and EOD simultaneously, thereby providing a trade-off between fairness scores. We also observe that EG-SP and EG-EO outperform GEG-CP in AOD reduction across 2 of the 3 analysed datasets. However, GEG-CP consistently reduces the baseline across all fairness definitions and datasets considered. On the contrary, we observe how EG-SP worsens the bias of the classifier under EO and AO definitions, considering the Adult dataset. 

\begin{figure*}[ht!]
    \centering
    \includegraphics[width=.8\linewidth]{figures/rq2_performance_comparison.pdf}
    \caption{RQ2: Effectiveness scores comparison for binary classification datasets}
    \label{fig:rq2_effectiveness}
\end{figure*}

In terms of effectiveness, Figure \ref{fig:rq2_effectiveness} shows that GEG-CP achieves a consistent level of predictive performance compared to the baseline model and the other EG versions. 

\subsubsection{Trade-offs}

\begin{figure}[ht!]
    \centering
    \includegraphics[width=.8\linewidth]{figures/rq2_tradeoff_alpha.pdf}
    \caption{RQ2: Fairness-effectiveness trade-off score at different effectiveness weight $\alpha$}
    \label{fig:rq2_tradeoff}
\end{figure}

The trade-off score distribution in Figure \ref{fig:rq2_tradeoff} highlights how GEG-CP outperforms the baseline EG-SP, while it provides a comparable trade-off with EG-EO, where the $\alpha$ value that intersects the baseline curve is around 0.8. Notably, we observe that when $\alpha=0$ (no account for effectiveness), EG-EO provides a higher bias reduction. However, the slope of the EG-EO curve is steeper, indicating a more pronounced fairness-effectiveness trade-off for this algorithm. Finally, we observe how, when $\alpha=1.0$, i.e. no contribution of fairness, the score obtained by the baseline is higher than that obtained by each of the EG variants. This result indicates that the EG algorithm has a greater impact on effectiveness alone than the results for multi-class classification observed in Figure \ref{fig:rq1_tradeoff}.

\begin{figure*}[ht!]
    \centering
    \begin{subfigure}{\textwidth}
        \includegraphics[width=\textwidth]{figures/rq2_pareto_sp_per_dataset.pdf}
        \caption{Statistical Parity}
    \end{subfigure}
    \begin{subfigure}{\textwidth}
        \includegraphics[width=\textwidth]{figures/rq2_pareto_eo_per_dataset.pdf}
        \caption{Equal Opportunity}
    \end{subfigure}
    \begin{subfigure}{\textwidth}
        \includegraphics[width=\textwidth]{figures/rq2_pareto_ao_per_dataset.pdf}
        \caption{Average Odds}
    \end{subfigure}
    \caption{RQ2: Performance of \tool and LR method for each binary dataset in the fairness/effectiveness space. Pareto optimal solutions are marked with a star.}
    \label{fig:rq2_pareto}
\end{figure*}

The Pareto optimality analysis reported in Figure \ref{fig:rq2_pareto} confirms the trends observed in Figure \ref{fig:rq2_tradeoff}. Specifically, we observe that the baseline LR model is always present on the Pareto fronts, given its higher effectiveness. GEG-CP appears in 7 out of 9 Pareto fronts, while EG-EO appears in 8 out of 9 fronts. EG-SP is instead the approach that is mostly dominated by other approaches, appearing in 6 out of 9 fronts.

\begin{rqanswer}
    \textbf{Answer to RQ$_2$:} \revised{In the binary classification context, GEG-CP achieves a consistent fairness improvement under all fairness definitions analysed. The impact on effectiveness is comparable to that of EG-EO, which, however, may not perform consistently across all fairness definitions.} 
\end{rqanswer}

\subsection{RQ$_3$: Bias Mitigation Methods Comparison}

\subsubsection{Single Scores}

\begin{figure*}[ht!]
    \centering
    \includegraphics[width=.8\linewidth]{figures/rq3_fairness_comparison.pdf}
    \caption{RQ3:Fairness scores comparison between DEMV, Blackbox and \tool variants}
    \label{fig:rq3_fairness}
\end{figure*}

Figure \ref{fig:rq3_fairness} reports the fairness scores obtained by the three \tool versions and the Blackbox and DEMV baseline approaches. From the plot, we observe how the Blackbox approach constantly provides worse bias scores than \tool. On the contrary, DEMV tend to behave consistently with \tool. Specifically, \tool provides statistically better fairness scores than DEMV with a large effect size under at least one fairness definition in 5 out of 9 datasets considered, while all fairness scores are never significantly worse than those obtained by DEMV (see Table \ref{tab:rq3_DEMV}).  

\begin{figure*}[ht!]
    \centering
    \includegraphics[width=.7\linewidth]{figures/rq3_performance_comparison.pdf}
    \caption{RQ3:Effectiveness scores comparison between DEMV, Blackbox and \tool variants}
    \label{fig:rq3_effectiveness}
\end{figure*}

In terms of effectiveness, as reported in Figure \ref{fig:rq3_effectiveness}, all \tool versions outperform the effectiveness achieved by the Blackbox algorithm. Conversely, DEMV provides a more consistent effectiveness among the different datasets, especially compared to GEG-SP and GEG-CP. GEG-EO instead provides comparable effectiveness to DEMV.

\subsubsection{Trade-offs}

\begin{figure}[ht!]
    \centering
    \includegraphics[width=.8\linewidth]{figures/rq3_tradeoff_alpha.pdf}
    \caption{RQ3: Fairness-effectiveness trade-off score at different effectiveness weight $\alpha$}
    \label{fig:rq3_tradeoff}
\end{figure}

Figure \ref{fig:rq3_tradeoff} shows how the Blackbox approach underperforms even the baseline LR model in the fairness/effectiveness trade-off. Conversely, DEMV provides a better fairness/effectiveness trade-off than GEG-SP and GEG-CP, but is outperformed by GEG-EO. By observing the slopes of the curves, DEMV shows a lower overall bias reduction than all \tool versions. However, the bias reduction achieved, especially by GEG-SP and GEG-EO, implies a higher reduction in effectiveness, as well.

\begin{figure*}[ht!]
    \centering
    \begin{subfigure}{\textwidth}
        \includegraphics[width=\textwidth]{figures/rq3_pareto_sp_per_dataset.pdf}
        \caption{Statistical Parity}
    \end{subfigure}
    \begin{subfigure}{\textwidth}
        \includegraphics[width=\textwidth]{figures/rq3_pareto_eo_per_dataset.pdf}
        \caption{Equal Opportunity}
    \end{subfigure}
    \begin{subfigure}{\textwidth}
        \includegraphics[width=\textwidth]{figures/rq3_pareto_ao_per_dataset.pdf}
        \caption{Average Odds}
    \end{subfigure}
    \caption{RQ3: Performance of Blackbox, DEMV, and \tool methods for each dataset in the fairness/effectiveness space. Pareto optimal solutions are marked with a star.}
    \label{fig:rq3_pareto}
\end{figure*}

However, the Pareto optimality results in Figure \ref{fig:rq3_pareto} show that DEMV is dominated by at least one version \tool in most of the datasets and metrics considered (DEMV is dominated in 66.67\% of the cases analysed). Therefore, while DEMV may provide more significant improvements than GEG-SP and GEG-CP in some cases (for instance, in the \textit{Crime} or \textit{Drug} datasets), solutions provided by \tool tend to be more general across datasets and fairness definitions.

\begin{rqanswer}
    \textbf{Answer to RQ$_3$:} All \tool versions overcome Blackbox in terms of Pareto optimality under all effectiveness and fairness metrics combinations. Concerning DEMV, \tool versions outperform it across most datasets and fairness metric combinations, although DEMV provides a better overall trade-off score than GEG-SP and GEG-CP.
\end{rqanswer}

\subsection{RQ$_4$: Sensitivity Analysis}

In the following, we discuss the results of our sensitivity analysis. Note how, for this RQ, to have an overall understanding of \tool behaviour under different hyperparameter values, following previous work \cite{daloisio_debiaser_2023}, we report the variation of the harmonic mean (H-Mean) among all fairness and effectiveness metrics. Metrics whose optimal value is 0 are transformed using $f(x)=1-x$, such that the closer the H-Mean is to 1, the better the model is in terms of fairness and effectiveness.

\begin{figure*}[ht!]
    \centering
    \includegraphics[width=.8\linewidth]{figures/harmonic_mean_rq5.pdf}
    \caption{RQ4: Variation of H-Mean scores at different $\delta$ values}
    \label{fig:rq4_delta}
\end{figure*}

Figure \ref{fig:rq4_delta} reports the variation in H-Mean under different $\delta$ scores for each dataset analysed. We observe how all versions of \tool are stable with the majority of datasets considered. The only exceptions are observed in Obesity and Crime, where a $\delta < 0.05$ causes a lower H-Mean, especially with GEG-EO. A possible explanation is that a lower tolerance to bias for these datasets (which are those with the highest number of classes, as reported in Table \ref{tab:dataset}) implies a decrease in the model's effectiveness, especially under the Precision score. A similar pattern is also observed in GEG-SP with the Law dataset. However, all \tool versions become stable in all datasets with $\delta > 0.005$.

\begin{figure*}[ht!]
    \centering
    \includegraphics[width=.8\linewidth]{figures/harmonic_mean_rq5_eta.pdf}
    \caption{RQ4: Variation of H-Mean scores at different $\eta$ values}
    \label{fig:rq4_eta}
\end{figure*}

Figure \ref{fig:rq4_eta} reports instead the H-Mean variation at different $\eta$ values. We observe a similar pattern compared to the Figure \ref{fig:rq4_delta}, with the highest variability in the Crime and Obesity datasets. This variation could still be explained by the high number of classes in these datasets, where a variation in the learning rate could more significantly influence the model's effectiveness. Nevertheless, the observed variations remain contained, with no large deviations. 

\begin{rqanswer}
    \textbf{Answer to RQ$_4$:} \tool is consistent with $\delta$ and $\eta$ variations, with the largest effects, but still contained, observed in the datasets with the highest number of classes.
\end{rqanswer}

\subsection{RQ$_5$: Different Base Classifiers}

In the following, we report the results obtained by benchmarking the three \tool versions on more complex classifiers. For ease of understanding, we discuss here only the results of the Pareto optimal analysis, while we report the detailed single-score results in \ref{app:detailed}.

\begin{figure}[ht!]
    \centering
    \includegraphics[width=\linewidth]{figures/rq4_pareto_heatmap.pdf}
    \caption{RQ5: Number of Pareto optimal solutions obtained by each approach considering the combination of F1 score and fairness metrics}
    \label{fig:rq5_pareto}
\end{figure}

Figure \ref{fig:rq5_pareto} shows the number of Pareto optimal solutions (obtained considering the combination of F1 score and fairness metrics) obtained by \tool versions and the baselines considered using an RF and an XGB base classifiers. First, we note how both base classifiers achieve a satisfactory trade-off between fairness and effectiveness, as also highlighted by the results reported in Tables \ref{tab:rq4_rf} and \ref{tab:rq4_gb}. Indeed, both base classifiers yield a total of 11 Pareto-optimal solutions across all datasets.  

By examining the results of the individual models, we observe that DEMV emerges as the best approach under the RF base classifier. On the contrary, all \tool versions dominate DEMV in terms of Pareto optimality, with GEG-SP and GEG-EO emerging as the methods that provide the highest number of Pareto-optimal solutions.

This result could be explained by the nature of the RF and XGB classifiers, in which the gradient-driven updates of the XGB model allow the cost-sensitive Lagrangian signal to propagate continuously. On the contrary, the discrete optimisation of the RF model turns the Lagrangian updates into noise that may impact the effectiveness of the prediction more.

Still, all \tool versions yield a large number of Pareto-optimal solutions for both base models, making them a valuable option.

\begin{rqanswer}
    \textbf{Answer to RQ$_5$:} \tool is effective in bias mitigation even when more complex base classifiers are employed. Especially with the XGB base classifier, GEG-SP and GEG-EO yield the most Pareto-optimal solutions across all datasets.
\end{rqanswer}

\section{Discussion}\label{sec:discussion}

In the following, we discuss the theoretical and practical implications derived from our study.

\subsection{Theoretical Implications}

The empirical evaluation has demonstrated that \tool is a valuable solution to improve fairness in multi-class classification contexts. Specifically, the results related to \textbf{RQ$_2$} revealed that incorporating the \textit{Combined Parity} constraint leads to a more consistent reduction of bias across all fairness definitions in binary classification tasks when compared to the original EG-SP and EG-EO approaches.


\revised{Moreover, results achieved by GEG-CP demonstrate how it can simultaneously improve SPD and EOD fairness definitions, compared to base classifiers and baseline bias mitigation methods. 
The reason why \tool achieves this result may be due to the constrained optimisation formulation, where SPD and EOD are optimised under a given tolerance value $\delta$ \cite{corbett2017algorithmic}. Nevertheless, we observe that SPD and EOD values obtained by GEG-CP are generally slightly worse than those obtained by the specific GEG-SP and GEG-EO approaches. This confirms a systematic trade-off when the algorithm optimises multiple definitions simultaneously.} Finally, the sensitivity analysis in the answer to \textbf{RQ$_4$}, demonstrated how GEG-CP is consistent under multiple $\delta$ bounds. 

These contributions demonstrate how \tool is a novel contribution in the field of bias mitigation in binary and multi-class classification tasks.

\subsection{Practical Insights}

From our empirical analysis, we can draw the following main practical insights and recommendations on using \tool:

\begin{itemize}
    
    \item \tool is effective in bias mitigation for multi-class classification regardless of the base classifier employed. 
    
    \item When employing an LR classifier for multi-class classification tasks, adopting \tool in use cases where the number of classes to predict is $\leq 4$ can also increase the effectiveness of the model. This outcome could be explained by the linear nature of the LR classifier, which better lends itself to Lagrangian optimisation.

    \item When adopting more complex classifiers such as RF or XGB for multi-class classification, \tool is still effective in bias mitigation, but it may decrease the prediction's effectiveness. More specifically, if the user can work on the training data, we suggest adopting DEMV to mitigate the bias of the RF classifier. Instead, when adopting XGB, \tool is the best solution to employ.


    \item Concerning binary classification, users can employ GEG-CP in use cases where higher fairness is more relevant than having more positive outcomes predicted \revised{(e.g., use cases protected by specific regulations or ML-models employed in sensitive scenarios~\cite{noauthor_eu_2023})}.

    \item We suggest adopting \tool instead of the pre-processing DEMV \revised{and Blackbox post-processing} approaches to achieve higher fairness with an LR base classifier. Additionally, when applied to datasets with low class imbalance, \tool can achieve higher prediction effectiveness than DEMV.

    \item \revised{As expected, we observed an increase in the training time of ML models while using \tool.\footnote{\revised{On average, on a MacBook Pro M3 with 18GB of RAM, the training time of an LR model with GEG-SP and GEG-EO is $\sim$10 seconds, while with GEG-CP is $\sim$6 seconds, compared to $\sim$2 seconds of the base LR model.}} This increase in the training time is due to the additional constraints imposed on the model. However, since \tool works on the training behaviour of the model, we did not observe an increase in the average inference time of a model trained with \tool.}

    \item \revised{Finally, \tool fills the existing lack of in-processing bias mitigation methods for multi-class classification. Therefore, it could be applied in all use cases where a user cannot change the training dataset but can work on the training behaviour of a model (e.g., regulations protected data \cite{austin2016will}), and it can also be chained with additional post-processing methods in later phases of the development workflow \cite{bellamy_ai_2019}.}

\end{itemize}

%% file: sections/conclusion.tex
In this paper, we addressed the topic of bias mitigation in multi-class classification settings. We first formulate the problem of fair multi-class learning as a multi-objective optimisation problem under multiple linear fairness constraints. Next, we propose \tool, an in-processing bias mitigation method to solve this task. In particular, \tool extends the EG approach from Agarwal et al.~\cite{agarwal_reductions_2018} to the multi-class classification setting. In addition, \tool allows the optimisation of a classifier under multiple fairness constraints simultaneously. 
We perform an extensive evaluation of \tool against six baseline approaches across seven multi-class and three binary datasets, using four effectiveness metrics and three fairness definitions. Our evaluation shows that \tool is successful at mitigating bias without severely impacting the effectiveness of the predictions. Additionally, we draw a set of practical insights for practitioners on using \tool in real-world scenarios.

Future work can extend \tool by including additional \revised{more general fairness constraints in the optimisation process}. Additionally, \tool can be extended to address intersectional fairness scenarios, i.e., where sensitive groups are identified by the combination of two or more sensitive variables~\cite{chen_fairness_2024}. 